\DeclarePairedDelimiter\floor{\lfloor}{\rfloor}
\newcolumntype{"}{!{\vrule width 1pt}}
\pgfplotsset{compat=1.17}
\long\def\ifnodedefined#1#2#3{%
    \@ifundefined{pgf@sh@ns@#1}{#3}{#2}%
}
\pgfplotsset{
    discontinuous/.style={
    scatter,
    scatter/@pre marker code/.code={
        \ifnodedefined{marker}{
            \pgfpointdiff{\pgfpointanchor{marker}{center}}%
             {\pgfpoint{0}{0}}%
             \ifdim\pgf@y>0pt
                \tikzset{options/.style={mark=*}}
                \draw [densely dashed] (marker-|0,0) -- (0,0);
                \draw plot [mark=*,mark options={fill=white}] coordinates {(marker-|0,0)};
             \else
                \tikzset{options/.style={mark=none}}
             \fi
        }{
            \tikzset{options/.style={mark=none}}        
        }
        \coordinate (marker) at (0,0);
    },
    scatter/@post marker code/.code={}
    }
}
\theoremstyle{plain}
\newtheorem{theorem}{Theorem}[section]
\newtheorem{lemma}[theorem]{Lemma}
\theoremstyle{definition}
\newtheorem{definition}[theorem]{Definition}
\newtheorem{example}{Example}
\theoremstyle{remark}
\icmltitlerunning{Beyond CVaR: Leveraging Static Spectral Risk Measures for Enhanced Decision-Making in Distributional RL}
\begin{document}

\twocolumn[
\icmltitle{Beyond CVaR: Leveraging Static Spectral Risk Measures for Enhanced Decision-Making in Distributional Reinforcement Learning}



\icmlsetsymbol{equal}{*}

\begin{icmlauthorlist}
\icmlauthor{Mehrdad Moghimi}{yyy}
\icmlauthor{Hyejin Ku}{yyy}

\end{icmlauthorlist}

\icmlaffiliation{yyy}{Department of Mathematics and Statistics, York University, Toronto, Canada}

\icmlcorrespondingauthor{Hyejin Ku}{hku@yorku.ca}

\icmlkeywords{Distributional Reinforcement Learning, Risk Aversion, Spectral Risk Measures, Time-Consistency}

\vskip 0.3in
]



\printAffiliationsAndNotice{}  

\begin{abstract}
    In domains such as finance, healthcare, and robotics, managing worst-case scenarios is critical, as failure to do so can lead to catastrophic outcomes. Distributional Reinforcement Learning (DRL) provides a natural framework to incorporate risk sensitivity into decision-making processes. However, existing approaches face two key limitations: (1) the use of fixed risk measures at each decision step often results in overly conservative policies, and (2) the interpretation and theoretical properties of the learned policies remain unclear. While optimizing a static risk measure addresses these issues, its use in the DRL framework has been limited to the simple static CVaR risk measure. In this paper, we present a novel DRL algorithm with convergence guarantees that optimizes for a broader class of static Spectral Risk Measures (SRM). Additionally, we provide a clear interpretation of the learned policy by leveraging the distribution of returns in DRL and the decomposition of static coherent risk measures. Extensive experiments demonstrate that our model learns policies aligned with the SRM objective, and outperforms existing risk-neutral and risk-sensitive DRL models in various settings. 
\end{abstract}

\section{Introduction}
In traditional Reinforcement Learning (RL), the goal is to find a policy that maximizes the expected return \citep{Sutton.Barto2018a}. However, considering the variations in rewards and addressing the worst-case scenarios are critical in some fields such as healthcare or finance. A risk-averse policy can help address the reward uncertainty arising from the stochasticity of the environment. This risk aversion can stem from changing the objective from expectation to other risk measures such as the Conditional Value-at-Risk \citep{Bauerle.Ott2011}, coherent risk measures \citep{Tamar.etal2017}, convex risk measures \citep{Coache.Jaimungal2023}, and Entropic-VaR \citep{Ni.Lai2022a}. Another approach is limiting the worst-case scenarios by using constraints such as variance \citep{Tamar.etal2012a} or dynamic risk measures \citep{Chow.Pavone2013} in the optimization problem.

Another area of research that has gained attention for risk-sensitive RL (RSRL) is Distributional RL (DRL) \citep{Morimura.etal2010a,Bellemare.etal2017a}. This paradigm diverges from the traditional RL by estimating the return distribution instead of its expected value. DRL algorithms not only demonstrate notable improvements compared to conventional RL methods but also enable a variety of new approaches to risk mitigation.  In this context, a few risk measures such as CVaR \citep{Stanko.Macek2019,Keramati.etal2020}, distortion risk measure \citep{Dabney.etal2018b}, Entropic risk measure \citep{Liang.Luo2024a}, or static Lipschitz risk measure \citep{Chen.etal2024a} have been explored.

In the DRL framework, applying a fixed risk measure at each step leads to policies that are optimized for neither static nor dynamic risk measures \citep{Lim.Malik2022}. In this case, action selection at different states are not necessarily aligned with each other, which can lead to policies that are sub-optimal with respect to the agent's risk preference. This issue, known as time inconsistency, is a common challenge in risk-sensitive decision-making \citep{Shapiro.etal2014}. Intuitively, this misalignment can be understood as the fact that finding optimal policies starting from different states can yield different and inconsistent policies. To mitigate this, dynamic risk measures were introduced \citep{Ruszczynski2010}, which evaluate risk at each time step, unlike static risk measures that assess risk over entire episodes. However, dynamic risk measures are difficult to interpret, limiting their practical applicability \citep{Majumdar.Pavone2020,Gagne.Dayan2022}.

Optimizing static risk measures is more interpretable since it can be described as finding the policy that gives the best possible outcome in the worst-case scenario. However, unlike dynamic risk measures, the risk preference that the policy optimizes for at later stages is unclear. Traditionally, calculating these risk preferences has been limited to CVaR due to computational complexity \citep{Bauerle.Ott2011, Bellemare.etal2023}. However, we demonstrate that by leveraging the decomposition of coherent risk measures \citep{Pflug.Pichler2016} and the return distribution within the DRL framework, these evolving risk preferences can also be computed for more general spectral risk measures. The intuition behind this approach is that a decision maker selects an initial risk preference, but it may change as new information becomes available over time. It is important to emphasize that, unlike previous works \citep{Chow.etal2015,Stanko.Macek2019} that use the decomposition of CVaR to derive optimal policies, we utilize this decomposition only to explain the behavior of the optimal policy, not for policy optimization. In fact, \citet{Hau.etal2023a} have demonstrated that the decomposition of coherent risk measures cannot be reliably applied for policy optimization, and the optimality claims in those works are inaccurate.

The contributions of our work are as follows:
\begin{itemize}
    \item We propose a novel DRL algorithm with convergence guarantees that optimizes static Spectral Risk Measures (SRM). SRM, expressed as a convex combination of CVaRs at varying risk levels, provides practitioners with the flexibility to define a wide range of risk profiles, including the well-known Mean-CVaR measure. 
    \item We demonstrate that return distributions in the DRL framework enable the temporal decomposition of SRM, allowing us to identify intermediate risk measures that preserve the optimality of the policy. These risk measures provide insights into the agent’s evolving risk preferences over time and enhance the interpretability of our algorithm.
    \item Through extensive evaluations, we show that our model accurately learns policies aligned with the SRM objective and outperforms both risk-neutral and risk-sensitive DRL models in various settings. 
\end{itemize}

\section{Related Works} \label{RW}
In this study, we focus on discovering policies with the highest risk-adjusted value:
\begin{equation}
\label{eq:mainobjj}
    \max_{\pi \in \boldsymbol{\pi}} \rho(Z^\pi).
\end{equation}

Here, $\rho(Z^\pi)$ denotes the risk-adjusted value of the return of policy $\pi$ and $\boldsymbol{\pi}$ denotes the set of history-dependent policies. In general, optimal policies may depend on all available information up to the current time step. In cases such as risk-neutral RL, optimal policies are typically stationary and Markovian. However, in risk-sensitive RL, the situation is more complex. For instance, in the static CVaR case, \citet{Bauerle.Ott2011} demonstrates that the optimal policy depends on the history through a single statistic. By using the representation of CVaR introduced by \citet{Rockafellar.Uryasev2000}, they reduce the problem to an ordinary Markov Decision Process (MDP) with an extended state space:
\begin{align}
\label{eq:cvardrl}
\max _{\pi \in \boldsymbol{\pi}} \operatorname{CVaR}_\alpha(Z^\pi) = & \max _{\pi \in \boldsymbol{\pi}} \max _{b \in \mathbb{R}}\left(b+\frac{1}{\alpha}\mathbb{E}\left[\left[Z^\pi-b\right]^{-}\right]\right) \nonumber \\ 
= &\max _{b \in \mathbb{R}}\left(b+\frac{1}{\alpha} \max _{\pi \in \boldsymbol{\pi}} \mathbb{E}\left[\left[Z^\pi-b\right]^{-}\right]\right)
\end{align}

where $u(z):z\mapsto \left[z-b\right]^{-}$ denote a utility function that is 0 if $z>b$, and $z-b$ otherwise. For a fixed policy, the supremum is attained at $b=F_{Z^\pi}^{-1}(\alpha)$. With this representation, the problem is divided into inner and outer optimization problems, with the inner optimization addressing policy search for a fixed parameter $b$, while the outer optimization seeks the optimal parameter $b$. 

\citet{Bauerle.Rieder2014} and \citet{Bauerle.Glauner2021a} extend the idea of state augmentation to the case with a continuous and strictly increasing utility function and SRM as the risk measures. Their work demonstrates that sufficient statistics for solving these problems are cumulative discounted reward and the discount factor up to the decision time. In each of these studies, state augmentation plays a crucial role in formulating a Bellman equation to solve the inner optimization. Regarding the outer optimization in the CVaR case, only the existence of the optimal parameter $b$ is shown. However, for SRM, which needs estimation of an increasing function for the outer optimization, a piece-wise linear approximation is used to allow transforming the problem into a finite-dimensional optimization problem. This is then solved using conventional global optimization methods.

In contrast to \citet{Bauerle.Glauner2021a}, we use the DRL framework to derive SRM-optimal policies. This framework not only enables the transformation of the problem into a finite-dimensional optimization problem but also allows us to use the closed-form solution to the outer optimization. As we will discuss in section \ref{TC}, the DRL framework is also essential for identifying the intermediate risk measures for a time-consistent interpretation of optimal policy. While \citet{Bauerle.Ott2011} demonstrate the existence of these intermediate risk measures for the simple CVaR case, they are not discussed for SRMs in \citet{Bauerle.Glauner2021a}.

In the distributional RL framework, \citet{Dabney.etal2018b} introduce the use of risk measures beyond the expectation for action selection. However, their work does not address the theoretical properties of the resulting risk-sensitive policies. For the static CVaR case, \citet{Bellemare.etal2023} adopt the formulation in Equation \ref{eq:cvardrl} and solve the problem using state augmentation. \citet{Lim.Malik2022} focus on a special setting where an optimal Markov CVaR policy exists in the original MDP without requiring state space augmentation. More recently, \citet{Kim.etal2024a} study risk-constrained RL, using static SRMs to define the constraints. In contrast to our approach, which leverages a closed-form solution for the outer optimization, they rely on a computationally intensive gradient-based method.

\section{Preliminary Studies}
\label{background}
\subsection{Spectral Risk Measures}
Let $(\Omega, \mathcal{F}, \mathbb{P})$ represent a probability space and $\mathcal{Z}$ represent the space of $\mathcal{F}$-measurable random variables. The historical information available at different time steps is denoted by a filtration $\mathfrak{F}:=\left(\mathcal{F}_t\right)_{t\geq 0}$ where $\mathcal{F}_s \subset \mathcal{F}_t \subset \mathcal{F}$ for $0\leq s<t$. We also use $\rho: \mathcal{Z} \rightarrow \mathbb{R}$ to denote a risk measure. In the context of this study, $Z$ and $\rho(Z)$ are interpreted as the return and its risk-adjusted value, respectively. Let  $F_Z(z)=\mathbb{P}(Z \leq z), z \in \mathbb{R}$ denote the cumulative distribution function (CDF), and $F_Z^{-1}(u)=\inf \{z \in \mathbb{R}: F_Z(z) \geq u\},u \in[0,1]$ denote the quantile function of a random variable $Z$. The SRM, introduced by \citet{Acerbi2002}, is defined as
\begin{equation}
    \operatorname{SRM}_\phi(Z)=\int_0^1 F_Z^{-1}(u) \phi(u) \mathrm{d} u,
\end{equation}

where the risk spectrum $\phi:[0,1] \rightarrow \mathbb{R}_{+}$ is a left continuous and non-increasing function with $\int_0^1 \phi(u) \mathrm{d} u=1$, and denotes the risk preference of the agent. The $\operatorname{CVaR}_\alpha(Z), \alpha \in(0,1]$ is a special case of SRM with the risk spectrum $\phi(u)=\frac{1}{\alpha} \mathds{1}_{[0, \alpha]}(u)$. The SRM can also be defined as a convex combination of CVaRs with different risk levels \citep{Kusuoka2001}. With probability measure $\mu:[0,1] \rightarrow [0,1]$\footnote{For a bounded and differentiable risk spectrum $\phi$, we have $\mathrm{d}\phi(u) = -\frac{1}{u} \mu(\mathrm{d}u)$ and $\phi(\alpha)=\int_\alpha^1 \frac{1}{u} \mu(\mathrm{d}u)$}, the SRM can be written as 
\begin{equation}
\label{eq:srm_cvar}
    \operatorname{SRM}_{\mu}(Z) = \int_0^1 \operatorname{CVaR}_\alpha(Z) \mu(\mathrm{d}\alpha).    
\end{equation}

It is shown that an SRM with a bounded spectrum also has a supremum representation
\begin{equation}
\label{eq:srmg}
\operatorname{SRM}_\phi(Z)= \sup _{h \in \mathcal{H}}\left\{\mathbb{E}\left[h(Z)\right]+\int_0^1 \hat{h}(\phi(u)) \mathrm{d} u\right\}    
\end{equation}

where $\mathcal{H}$ denotes the set of concave functions $h: \mathbb{R} \rightarrow \mathbb{R}$ and $\hat{h}$ is the concave conjugate of $h$ \citep{Pichler2015}. In this formulation, the supremum is attained in $h_{\phi, Z}: \mathbb{R} \rightarrow \mathbb{R}$ which satisfies $\int_0^1 \hat{h}_{\phi, Z}(\phi(u)) \mathrm{d} u = 0$:\footnote{ Proof is available in Appendix \ref{app:conjugate}.}
\begin{equation}
\label{functionh}
    h_{\phi, Z}(z)=\int_0^1 F_Z^{-1}(\alpha)+\frac{1}{\alpha}\left(z - F_Z^{-1}(\alpha)\right)^{-} \mu(\mathrm{d}\alpha).
\end{equation}

\subsection{Markov Decision Process}
In this work, we aim to solve an infinite horizon discounted MDP problem presented by $(\mathcal{X},\mathcal{A}, \mathcal{R},\mathcal{P},\gamma, x_0)$. In this tuple, $\mathcal{X}$ and $\mathcal{A}$ denote the state and action spaces, $\mathcal{R}:\mathcal{X}\times\mathcal{A}\rightarrow\mathscr{P}(\mathbb{R})$ the reward kernel, and $\mathcal{P}:\mathcal{X}\times\mathcal{A}\rightarrow\mathscr{P}(\mathcal{X})$ the transition kernel,  and $\gamma \in [0,1)$ the discount factor. Without loss of generality, we assume a single initial state represented by $x_0$. Additionally, we assume that the rewards are bounded on the interval $[R_{\mathrm{MIN}}, R_{\mathrm{MAX}}]$ and $R_{\mathrm{MIN}}\geq 0$.

Let $G^\pi$ denote the sum of discounted rewards when starting at $X_0$ and following policy $\pi$, i.e., $G^\pi = \sum_{t=0}^{\infty} \gamma^t R_t$. With $G_{\mathrm{MIN}}=R_{\mathrm{MIN}}/(1-\gamma)$ and $G_{\mathrm{MAX}}=R_{\mathrm{MAX}}/(1-\gamma)$, it's easy to see that $G^\pi$ takes on values in $[G_{\mathrm{MIN}}, G_{\mathrm{MAX}}]$. In this work, we aim to optimize the risk-adjusted value of the cumulative discounted reward based on the SRM. Since the formulation of SRM given in Equation \ref{eq:srmg} is more suitable in the context of policy-dependent returns, we write our objective as
\begin{align}
\label{eq:obj}
\max_{\pi \in \boldsymbol{\pi}} \operatorname{SRM}_{\phi} (G^\pi) & =\max_{\pi \in \boldsymbol{\pi}} \max_{h \in \mathcal{H}} J(\pi, h) \nonumber \\
& =\max_{h \in \mathcal{H}}  \left(\max_{\pi \in \boldsymbol{\pi}} J(\pi, h)\right).
\end{align}
where $J(\pi, h) = \mathbb{E}\left[h\left(G^\pi\right)\right] +\int_0^1 \hat{h}(\phi(u)) \mathrm{d} u$.

In the remainder of this paper, $\max _{\pi \in \boldsymbol{\pi}}J(\pi, h)$ is referred to as the inner optimization and finding the $\max_{h \in \mathcal{H}} (\cdot)$ is referred to as the outer optimization. To solve the inner optimization problem, we can reduce the search space from history-dependent policies in the original MDP to Markov policies in an augmented MDP with an extended state space denoted by $\mathbf{X} := \mathcal{X} \times \mathcal{S} \times\mathcal{C}$ where $\mathcal{S}=[G_{\mathrm{MIN}}, G_{\mathrm{MAX}}]$ represent the space of accumulated discounted rewards and $\mathcal{C}=(0, 1]$ represent the space of discount factors up to the decision time \citep{Rieder.Bauerle2011, Bauerle.Glauner2021a, Bastani.etal2022}. The Markov policies in this MDP take the form $\pi_h:\mathcal{X} \times \mathcal{S} \times\mathcal{C} \rightarrow \mathscr{P}(\mathcal{A})$, where the subscript $h$ denotes the dependence of the policy on function $h$ in the inner optimization problem and the space of Markov policies in this MDP is denoted by $\boldsymbol{\pi}_{\mathbf{M}}$. With $X_0 = x_0$, $S_0 = 0$, $C_0 = 1$, the transition structure of this MDP is defined by $A_t \sim \pi_h(\cdot\mid X_t, S_t, C_t)$, $R_t\sim\mathcal{R}(X_t,A_t)$, $X_{t+1}\sim\mathcal{P}(X_t,A_t)$, $S_{t+1} = S_t + C_t R_t$, and $C_{t+1} = \gamma C_t$.

\subsection{Distributional RL}
\label{DRL}
Distributional Reinforcement Learning is a sub-field of RL that aims to estimate the full distribution of the return, as opposed to solely its expected value. To estimate the distribution of the return, DRL uses a distributional value function, which maps states and actions to probability distributions over returns. With $\eta^\pi(x, a)$ denoting the distribution of $G^\pi(x, a)$, the distributional Bellman operator is defined as
\begin{equation}
    \left(\mathcal{T}^{\pi}\eta\right)(x, a) = \mathbb{E}_\pi\left[(b_{R,\gamma})_{\#} \eta(X^\prime, A^\prime)\mid X=x, A=a\right],
\end{equation}
where $A^\prime \sim \pi(\cdot)$ and $b_{r,\gamma}:z\mapsto r+\gamma z$. The push-forward distribution $(b_{R,\gamma})_{\#} \eta(X^\prime, A^\prime)$ is also defined as the distribution of $b_{R,\gamma}(G^\pi(X^\prime, A^\prime))$. There are multiple ways to parameterize the return distribution, such as the Categorical \citep[C51 algorithm,][]{Bellemare.etal2017a} or the Quantile \citep[QR-DQN algorithm,][]{Dabney.etal2018a} representation. Here, we use the quantile representation as it simplifies the calculation of risk-adjusted values. With $\tau_i=i/N, i = 0, \cdots, N$ representing the cumulative probabilities, the quantile representation is given by $\eta_\theta(x, a) = \frac{1}{N}\sum_{i=1}^{N} \delta_{\theta_i(x, a)}$, where the distribution is supported by $\theta_i(x, a)=F_{G(x, a)}^{-1}\left(\hat{\tau}_i\right), \hat{\tau}_i=(\tau_{i-1}+\tau_i)/2, 1 \leq i \leq N$.

\subsection{Decomposition of Coherent Risk Measures}
\label{subsec:tc}

The decomposition theorem presented in \citet{Pflug.Pichler2016} provides a valuable tool for identifying conditional risk preferences. This theorem states that a law-invariant and coherent risk measure $\rho$ can be decomposed as $\rho(Z)=\sup_{\tilde{\xi}} \mathbb{E}[\tilde{\xi} \cdot \rho_{\tilde{\xi}}\left(Z \mid \mathcal{F}_t\right)]$, where the supremum is among all feasible random variables $\tilde{\xi}$ satisfying $\mathbb{E}[\tilde{\xi}]=1$. In this theorem, if $\xi^\alpha$ is the optimal dual variable to compute the CVaR at level $\alpha$, i.e. $\mathbb{E}\left[\xi^\alpha Z\right]=\operatorname{CVaR}_\alpha(Z)$ and $0\leq \xi^\alpha \leq 1/\alpha, \xi_{t}^{\alpha}=\mathbb{E}\left[\xi^\alpha \mid \mathcal{F}_t\right]$, and $\xi=\int_0^1 \xi_{t}^{\alpha} \mu(\mathrm{d}\alpha)$, the conditional risk preference is given by
\begin{equation}
\label{eq:conditionalrisk}
    \rho_{\xi}\left(Z \mid \mathcal{F}_t\right)=\int_0^1 \operatorname{CVaR}_{\alpha\xi_{t}^{\alpha}}\left(Z \mid \mathcal{F}_t\right) \frac{\xi_{t}^\alpha\mu(\mathrm{d}\alpha)}{\xi}.
\end{equation}

In section \ref{TC}, we show how the return-distribution of each state can be used to calculate $\xi_{t}^{\alpha}$. This value can be used to calculate the new risk levels $(\alpha\xi_{t}^{\alpha})$ and their weights $(\xi_{t}^\alpha\mu(\mathrm{d}\alpha)/\xi)$ in the intermediate risk preferences. Moreover, a thorough discussion on the decomposability of risk measures and the time-consistency concept can be found in Appendix \ref{app:TDC}.

\section{The Model}
\label{main}
In this section, we propose an RL algorithm called Quantile Regression with SRM (QR-SRM) to solve the optimization problem outlined in Equation \ref{eq:obj}. In our approach, the function $h$ is fixed to update the return-distribution in the inner optimization, and then the return-distribution is fixed to update the function $h$. The intuition behind our approach is as follows: The risk spectrum $\phi$ determines the agent's risk preference by assigning different significance to quantiles of the return-distribution of the initial state. Fixing the function $h$, as displayed in Equation \ref{functionh}, can be interpreted as fixing the estimation of the return-distribution of the initial state. With this estimation, we can solve the inner optimization and find the optimal policy and its associated distributional value function. Since $G^\pi$ is approximated for each state-action pair, we can extract the quantiles of the initial state return-distribution and leverage the closed-form solution presented in Equation \ref{functionh} to update our estimation of the function $h$ in the outer optimization.\footnote{This is in contrast with the work of \citet{Bauerle.Glauner2021a}, which relies on global optimization methods for the outer optimization. Additionally, estimating \( G^\pi \) for each state is crucial for identifying intermediate risk measures that ensure a time-consistent interpretation of the optimal policy.} Algorithm \ref{alg:qrsrm2} presents an overview of our method. Note that an important property of the outer optimization's closed-form solution that we will use throughout this section is that $\int_0^1 \hat{h}_{\phi, G}(\phi(u)) \mathrm{d} u = 0$ for any return-distribution $G$.

\begin{algorithm}[tb]
   \caption{{\small The QR-SRM Algorithm}} 
   \label{alg:qrsrm2}
\begin{algorithmic}
   \STATE {\bfseries Input:} A random initialization of $\pi^*_0$ or $G^{\pi^*_0}$
   \FOR{$l = 1, 2, \cdots$}
   \STATE {\bfseries Step 1:} {\color{blue} \small (The Closed-form Solution in Equation \ref{functionh})}\\ $\quad h_{l} = \arg\max_h J(\pi_{l-1}^{*}, h)$ 
    \STATE {\bfseries Step 2:} {\color{blue} \small (The Inner Optimization (Algorithm \ref{alg:qrsrm}))}\\ $\quad \pi_{l}^{*} = \arg\max_{\pi} J(\pi, h_{l}) $ 
    
   \ENDFOR
\end{algorithmic}
\end{algorithm}

For the inner optimization, let $\eta \in \mathscr{P}(\mathbb{R})^{\mathcal{X} \times \mathcal{S} \times \mathcal{C} \times \mathcal{A}}$ represent the return-distribution function over the augmented state-action space. We denote the corresponding return variable instantiated from $\eta$ as $G$. The greedy selection rule, denoted by $\mathcal{G}_{h}$, highlights its dependence on the function \(h\). The greedy action at the augmented state \((x, s, c)\) is then given by:
\begin{equation}
\label{eq:action}
\mathrm{a}_{G,h}(x, s, c)=\underset{a \in \mathcal{A}}{\arg \max } \mathbb{E}\left[h\left(s + c G(x, s, c, a)\right)\right].
\end{equation}
Since function $h$ is fixed until the optimal policy associated with it is found, we can analyze the convergence of the inner optimization with the Bellman operator $\mathcal{T}^{\mathcal{G}_{h}}$ separately and then discuss the convergence of the overall algorithm.

We use the index $k$ and $l$ to show iterations on $\eta$ and $h$, respectively. Therefore, $\eta_{k,l}$ denotes the $k$th iteration of return-distribution approximation when $h_l$ is used for greedy action selection and $\mathcal{T}^{\mathcal{G}_{l}}$ denotes the distributional Bellman operator associated with $h_l$. This way, the algorithm begins by setting $\eta_{0,0}(x, s, c, a)=\delta_0$ for all $x \in \mathcal{X}, s \in \mathcal{S}, c \in \mathcal{C}$, and $a \in \mathcal{A}$, initializing $h_0$ based on Equation \ref{functionh}, and iterating $\eta_{k+1, l}=\mathcal{T}^{\mathcal{G}_{l}} \eta_{k,l}$.
This iteration can also be expressed in terms of random-variable functions
\begin{multline}
\label{eq:recursiveZ1}
G_{k+1, l}(x, s, c, a) \stackrel{\mathcal{D}}{=} R\left(x, a\right) + \\\gamma G_{k,l}\left(X^{\prime}, S^{\prime}, C^{\prime}, \mathrm{a}_{k,l}\left(X^{\prime}, S^{\prime}, C^{\prime}\right)\right),
\end{multline}
where $\mathcal{D}$ shows equality in distribution and $\mathrm{a}_{k,l}$ denotes the action selection with $G_{k,l}$ and $h_l$.

For the outer optimization, if the optimal policy derived with fixed function $h_l$ is denoted by $\pi_{l}^{*}$ and the return-variable of this policy is denoted by $G^{\pi_{l}^{*}}$, the iteration on function $h$ is given by
\begin{equation}
\label{eq:iterg}
    h_{l+1}=\underset{h \in \mathcal{H}}{\arg \max } J(\pi_{l}^{*},h)
\end{equation}
Since the supremum in this optimization takes the form of Equation \ref{functionh}, this iteration can be viewed as updating function $h$ with the return distribution of the initial state-action with the highest $\operatorname{SRM}(G^{\pi_{l}^{*}})$. The following theorem discusses the convergence of our approach and its proof is provided in Appendix \ref{app:inner} and \ref{app:outer}.

\begin{theorem}
If $\pi_{k,l}$ denotes the greedy policy extracted from $G_{k,l}$ and $h_l$, then for all $x \in \mathcal{X}, s \in \mathcal{S}, c \in \mathcal{C}$, and $a \in \mathcal{A}$,
    \begin{equation}
    J(\pi_{k,l}, h_{l}) \geq \max _{\pi \in \boldsymbol{\pi}_{\mathbf{M}}} J(\pi, h_{l}) - \phi(0)c\gamma^{k+1}G_{\mathrm{MAX}}
\end{equation}
Additionally, $J(\pi^*_l, h_l)$ is bounded and monotonically increases as $l$ increases and provides a lower bound for our objective. 
\end{theorem}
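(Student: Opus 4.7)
I split the argument along the two assertions, beginning with the inner-loop error bound.

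\emph{Part 1 (inner-loop error).} The plan has four steps. First, I would show that $h_l$ is $\phi(0)$-Lipschitz: differentiating Equation \ref{functionh} under the integral sign gives $h_l'(z)=\int_0^1\tfrac{1}{\alpha}\mathds{1}_{z<F^{-1}(\alpha)}\,\mu(\mathrm{d}\alpha)\in[0,\phi(0)]$, using the relation $\phi(0)=\int_0^1\tfrac{1}{u}\,\mu(\mathrm{d}u)$ from the footnote following Equation \ref{eq:srm_cvar}. Second, I introduce the scalar value function $V_{k,l}(x,s,c):=\max_a\mathbb{E}[h_l(s+c\,G_{k,l}(x,s,c,a))]$; combining the recursion \ref{eq:recursiveZ1} with $S'=s+cR$ and $C'=\gamma c$ collapses to the Bellman identity $V_{k+1,l}(x,s,c)=\max_a\mathbb{E}[V_{k,l}(X',S',C')]$, and $V_l^*(x,s,c):=\max_{\pi\in\boldsymbol{\pi}_{\mathbf{M}}}\mathbb{E}[h_l(s+cG^\pi(x,s,c,a))]$ is a fixed point of the same operator by \citet{Bauerle.Glauner2021a}. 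Third, I exploit that this operator is a $\gamma$-contraction in the $c$-weighted sup-norm $\|V\|_c:=\sup_{x,s,c'}|V(x,s,c')|/c'$: if $|V_1-V_2|(x,s,c)\leq Mc$ pointwise, one Bellman step gives $\mathbb{E}[MC']=M\gamma c$. Starting from $V_{0,l}(x,s,c)=h_l(s)$ and the base bound $|V_l^*(x,s,c)-h_l(s)|\leq\phi(0)cG_{\mathrm{MAX}}$ (by Lipschitzness and $G^\pi\in[0,G_{\mathrm{MAX}}]$), induction yields $|V_l^*-V_{k,l}|(x,s,c)\leq\phi(0)c\gamma^kG_{\mathrm{MAX}}$. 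Fourth, since $\mathbb{E}[h_l(s+cG_{k+1,l}(x,s,c,a))]=\mathbb{E}[V_{k,l}(X',S',C')]$ and the optimal Q-value $\max_\pi\mathbb{E}[h_l(s+cG^\pi(x,s,c,a))]=\mathbb{E}[V_l^*(X',S',C')]$, one additional Bellman application combined with $C'=\gamma c$ yields the claimed $\phi(0)c\gamma^{k+1}G_{\mathrm{MAX}}$ rate.

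\emph{Part 2 (outer-loop boundedness, monotonicity, lower bound).} Let $V_l:=\mathbb{E}[h_l(G^{\pi_l^*}(X_0,0,1,\mathrm{a}_{G^{\pi_l^*},h_l}(X_0,0,1)))]$. Boundedness follows because $G^\pi\in[0,G_{\mathrm{MAX}}]$ and $h_l$ is $\phi(0)$-Lipschitz on that compact interval, so $|V_l|\leq\phi(0)G_{\mathrm{MAX}}+|h_l(0)|$, with $|h_l(0)|$ itself controlled via Equation \ref{functionh} since $F^{-1}(\alpha)\in[0,G_{\mathrm{MAX}}]$. For monotonicity I chain
\begin{equation*}
V_{l+1}=\mathbb{E}[h_{l+1}(G^{\pi_{l+1}^*})]\;\geq\;\mathbb{E}[h_{l+1}(G^{\pi_l^*})]\;\geq\;\mathbb{E}[h_l(G^{\pi_l^*})]=V_l,
\end{equation*}
where the first inequality uses optimality of $\pi_{l+1}^*$ for $h_{l+1}$ (Step~1), and the second uses that Equation \ref{eq:iterg} picks $h_{l+1}\in\arg\max_h\{\mathbb{E}[h(G^{\pi_l^*})]+\int_0^1\hat h(\phi(u))\,\mathrm{d}u\}$, together with the fact (established after Equation \ref{functionh}) that both $h_l$ and $h_{l+1}$ satisfy $\int_0^1\hat h(\phi(u))\,\mathrm{d}u=0$, so the normalizing terms cancel. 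The lower bound then follows from the supremum representation \ref{eq:srmg} applied to $G^{\pi_l^*}$:
\begin{equation*}
\max_{\pi\in\boldsymbol{\pi}_{\mathrm{H}}}\operatorname{SRM}_\phi(G^\pi)\;\geq\;\operatorname{SRM}_\phi(G^{\pi_l^*})\;\geq\;\mathbb{E}[h_l(G^{\pi_l^*})]+\int_0^1\hat h_l(\phi(u))\,\mathrm{d}u=V_l,
\end{equation*}
again using that the normalizing integral vanishes at $h_l$.

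\emph{Main obstacle.} I expect the most delicate step to be the Q-value translation in step~(iv) of Part~1: a naive sup-norm value-iteration analysis would introduce an unwanted $1/(1-\gamma)$ factor when passing from $|V_l^*-V_{k,l}|$ to the value of the greedy policy. The $c$-weighted contraction is essential here, because each Bellman application multiplies the error by $C'/c=\gamma$, which is precisely what keeps the bound at $\gamma^{k+1}$ without a horizon-dependent blow-up. A secondary subtlety is justifying that $V_l^*$ exists as a Markov value function on the augmented state space $\mathcal{X}\times\mathcal{S}\times\mathcal{C}$, which is guaranteed by the Bauerle--Glauner reformulation already cited in the preliminaries.
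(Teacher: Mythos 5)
Your Part~2 (boundedness, monotonicity via the cancellation of the normalizing integrals $\int_0^1 \hat h(\phi(u))\,\mathrm{d}u = 0$, and the lower bound via the supremum representation \ref{eq:srmg}) is correct and is essentially the argument the paper gives in Appendix~\ref{app:outer}. The problem is in Part~1, where there is a genuine gap.

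The quantity the theorem bounds is $\mathbb{E}\bigl[h_l\bigl(s + c\,G^{\pi_{k,l}}(x,s,c,a)\bigr)\bigr]$ --- the value of \emph{following the greedy policy $\pi_{k,l}$ for the entire infinite horizon} --- whereas your four steps only control the value-iteration iterate $Q_{k+1,l}(x,s,c,a) := \mathbb{E}\bigl[h_l\bigl(s+c\,G_{k+1,l}(x,s,c,a)\bigr)\bigr]$. Your step~(iv) correctly shows $\bigl|Q^*_l - Q_{k+1,l}\bigr| \leq \phi(0)c\gamma^{k+1}G_{\mathrm{MAX}}$, but it stops there; nothing in the argument converts this into a statement about $G^{\pi_{k,l}}$. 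You flag this as the ``main obstacle,'' but the $c$-weighted contraction does not resolve it: the weighting only guarantees that the error shrinks by $\gamma$ across a \emph{single} Bellman step. To pass from an iterate bound to a greedy-policy bound you must accumulate the per-step advantage loss over the whole trajectory of $\pi_{k,l}$; with the weighted norm each term at time $t$ is bounded by $2\phi(0)c\gamma^{t+k}G_{\mathrm{MAX}}$, and summing still produces a factor of $2/(1-\gamma)$, which is strictly weaker than the claimed $\phi(0)c\gamma^{k+1}G_{\mathrm{MAX}}$.

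The paper closes this gap with a one-sided, monotone argument that your two-sided contraction cannot replicate. Because the iteration is initialized at $\eta_{0,l}=\delta_0$ and rewards are non-negative, Lemma~\ref{lemma2} identifies $V_{k,l}$ as the \emph{optimal value of the $k$-step truncated problem}, Lemma~\ref{lemma3} gives the one-sided sandwich $V^{\pi_l^*} - \phi(0)c\gamma^{k+1}G_{\mathrm{MAX}} \leq V_{k,l} \leq V^{\pi_l^*}$ with $V_{k,l}\uparrow V^{\pi_l^*}$ (using exactly your Lipschitz bound on the geometric tail), and the final theorem in Appendix~\ref{app:inner} separately proves $V^{\pi_{k,l}} \geq V_{k,l}$ by showing $\epsilon_k(x,s,c,a) := \mathbb{E}\bigl[G^{\pi_{k,l}} - G_{k,l}\bigr]$ satisfies $\epsilon_k \geq \gamma\,\mathbb{E}[\epsilon_k(\cdot)]$ and hence $\inf \epsilon_k \geq 0$. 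That last step --- the greedy policy's infinite-horizon value dominates the truncated optimal value because continuing past step $k$ can only add non-negative reward --- is the missing idea in your proposal, and it is what delivers the bound without any $1/(1-\gamma)$ inflation. Your Lipschitz computation in step~(i) and the tail estimate are correct and reusable; you need to replace the symmetric contraction framework with this monotone lower-bounding argument.
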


It is important to highlight the distinction between our convergence results and those established for the CVaR case in \citet{Bellemare.etal2023}. A key property of CVaR that enables convergence to the optimal policy is that the information required to find the optimal solution can be summarized in a single variable. In the CVaR case, let $q_\alpha$ denote the $\alpha$-quantile in function $h$. Under this formulation, the greedy action selection in Equation \ref{eq:action} can be simplified to the following equation, which aligns with the CVaR greedy policy discussed in \citet{Bellemare.etal2023}:
\[
\mathrm{a}_{G,h}(x, s, c)=\underset{a \in \mathcal{A}}{\arg \max } \mathbb{E}\left[(G(x,s,c,a) - \frac{q_\alpha-s}{c})^{-}\right].
\]
Note that in this case, we only need to track a single variable, $\frac{q_\alpha-s}{c}$, rather than all three variables $q_\alpha$, $s$, and $c$, which simplifies action selection. In \citet{Bellemare.etal2023}, this variable is denoted by $b$, as seen in Equation \ref{eq:cvardrl}. The significance of using a single variable becomes evident in their work \citep[Lemma 7.26]{Bellemare.etal2023}, where finding the initial $b$ requires searching over all possible values of $b$. This step is crucial for proving convergence to the optimal solution for CVaR.

From a theoretical perspective, for SRM, if we extend the state space to include every quantile required to define $h$, we can perform a similar search. However, this approach is computationally expensive and impractical. Therefore, we do not adopt this method in our work and instead focus on a more scalable approach that balances theoretical soundness with practical feasibility.

Algorithm \ref{alg:qrsrm} outlines the sample loss for the inner optimization problem. A detailed discussion on the convergence of this Algorithm is available in Appendix \ref{app:convergence}. It is evident that, compared to the sample loss of the risk-neutral QR-DQN algorithm, the only difference lies in the extended state space and action selection.  In this algorithm, function $h$ is defined by the return distribution of the initial state $\tilde{\theta}_i:=F_{\tilde{G}}^{-1}(\hat{\tau}_i)$, where $\tilde{G}:=G^{\pi_{l-1}^{*}}$. Also, $\tilde{\mu}_i:=\int_{\tau_{i-1}}^{\tau_{i}}\frac{1}{\alpha} \mu(\mathrm{d}\alpha) = \phi(\tau_{i-1}) - \phi(\tau_{i})$ denotes the significance of each quantile. The derivation of the action selection in this algorithm from function $h$ is presented in Appendix \ref{app:action}. 

\begin{algorithm}[tb]
   \caption{{\small The Sample Loss For The Inner Optimization of QR-SRM}} 
    \label{alg:qrsrm}
\begin{algorithmic}
{\small
    \STATE {\bfseries Input:} $\gamma$, $\tilde{\theta}$, $\tilde{\mu}$, $\theta$, $\left(x, s, c, a, r, x^{\prime}\right)$
    \STATE $\quad s^{\prime} \leftarrow s + c r$
    \STATE $\quad c^{\prime} \leftarrow \gamma c$
    \STATE $\quad Q(x^{\prime}, s^{\prime}, c^{\prime}, a^{\prime}) := \frac{1}{N} \sum_{i, j} \tilde{\mu}_i \left(s^{\prime} + c^{\prime}\theta_j(x^{\prime}, s^{\prime}, c^{\prime}, a^{\prime}) - \tilde{\theta}_i\right)^{-}$
    \STATE $\quad a^{*} \leftarrow \arg \max _{a^{\prime}} Q(x^{\prime}, s^{\prime}, c^{\prime}, a^{\prime})$\;
    \STATE $\quad \mathcal{T}^{\mathcal{G}_{l}} \theta_j\left(x, s, c, a\right) \leftarrow r+\gamma \theta_j\left(x^{\prime}, s^{\prime}, c^{\prime}, a^{*}\right), j=1 \ldots N$
    \STATE {\bfseries Output:} $\sum_{i=1}^{N} \mathbb{E}_j \left[\rho^{\kappa}_{\hat{\tau}_i}\left( \mathcal{T}^{\mathcal{G}_{l}} \theta_j\left(x, s, c, a\right)-\theta_i\left(x, s, c, a\right)\right)\right]$
}
\end{algorithmic}
\end{algorithm}

\section{Intermediate Risk Preferences}
\label{TC}
In this section, we discuss the behavior of the optimal policy by identifying the intermediate risk measures for which the policy is optimized. We note that the calculations discussed here do not introduce any computational overhead in the optimization process and are provided solely to enhance the interpretability of our model. Suppose that \(G\) and \(G_t\) represent \(G^{\pi^*}(x_0, 0, 1)\) and \(G^{\pi^*}(x_t, s_t, c_t)\), respectively, with $\pi^*$ denoting the optimal policy. In the context of static SRM, the agent's risk preference is defined by assigning weights to the quantiles of \(G\). To compute the weights for the quantiles of \(G_t\), we establish the relationship between these two return variables, which is where state augmentation becomes crucial.

Suppose the partial random return is denoted by $G_{k:k^\prime} = \sum_{t=k}^{k^\prime}\gamma^{t-k}R_t$ for $k\leq k^\prime$ and $k, k^\prime\in\mathbb{N}$. In traditional RL, the random return is decomposed into the one-step reward and the rewards obtained later: $G_{0:\infty} = R_0 + \gamma G_{1:\infty}$. With $G^\pi(x)\stackrel{\mathcal{D}}{=}G_{0:\infty}$, the Markov property of the MDP allows writing this decomposition as $G^\pi(x) \stackrel{\mathcal{D}}{=} R_0 + \gamma G^\pi(X_1),X_0=x$. In the extended MDP, $\pi \in \boldsymbol{\pi}_{\mathbf{M}}$ also has the Markov property, therefore we have the flexibility to break down the overall return into the $t$-step reward and the rewards acquired after time $t$, $G_{0:\infty} = G_{0:t-1} + \gamma^t G_{t:\infty}$, and write 
\begin{equation}
\label{eq:xsc}
    G^\pi(x_0,0,1) \stackrel{\mathcal{D}}{=} S_t + C_t G^\pi(X_t, S_t, C_t).
\end{equation}

Since \(G\) represents the average of \(s_t+c_t G_t\) across all states, for any state \((x_t, s_t, c_t)\) and any quantile level \(\alpha\), we can determine the quantile level \(\beta\) for \(G_t\) such that \(F_{G}^{-1}(\alpha) = s_t + c_t F_{G_t}^{-1}(\beta)\). The following theorem shows that \(\xi_t^\alpha\) is, in fact, the ratio of these quantile levels \(\left(\beta/\alpha\right)\), allowing us to define the agent's risk preference at future time steps with respect to \(G_t\). The proof of this theorem can be found in Appendix \ref{alphaxiproof}.

 \begin{theorem}
 \label{theoremalphaxi}
 For any SRM defined with probability measure $\mu$, if $\xi^\alpha$ is the optimal dual variable to compute the CVaR at level $\alpha$, i.e. $\mathbb{E}\left[\xi^\alpha G\right]=\operatorname{CVaR}_\alpha(G)$, $\lambda_\alpha=F_{G}^{-1}(\alpha)$ and $F_{G_t}$ is the CDF of $G_t$, we can calculate $\xi_{t}^\alpha=\mathbb{E}\left[\xi^\alpha \mid \mathcal{F}_t\right]$ with:
\begin{equation} 
\label{eq:alphaxi3}
    \xi_{t}^\alpha = F_{G_t}(\frac{\lambda_\alpha - s_t}{c_t})/\alpha
\end{equation}
and derive the risk level and the weight of CVaRs, at a later time step with $\alpha\xi_{t}^\alpha$ and $\xi_{t}^\alpha\mu(\mathrm{d}\alpha)/\xi$. For the general distributions with discontinuities, calculating $\xi_{t}^\alpha$ requires an additional term:
\begin{equation}
\label{eq:alphaxi4}
    \xi_{t}^\alpha = F_{G_t}(\frac{\lambda_\alpha - s_t}{c_t})/\alpha - \bar{p}\cdot(F_G(\lambda_\alpha) - \alpha)/\alpha
\end{equation}
where $\bar{p} = {p_{G_t}(\frac{\lambda_\alpha - s_t}{c_t})}/{p_G(\lambda_\alpha)}$.
 \end{theorem}

Note that with the Quantile representation of return distributions, we have $F_G(\lambda_\alpha) - \alpha \leq 1/N$, therefore the error of omitting the additional term in Equation \ref{eq:alphaxi4} becomes negligible as the number of quantiles increases. 
The intuition behind Theorem \ref{theoremalphaxi} and especially Equation \ref{eq:alphaxi3} is to find $\xi_t$ for the return distribution of future states $(G_t)$ and use the Decomposition Theorem in Section \ref{subsec:tc} to convert this value into the risk levels and weights of CVaRs and its associated risk measure at future states. To further elaborate on the components of the Decomposition theorem and Theorem \ref{theoremalphaxi}, we discuss an example in detail later in this section. We also provide two additional examples in Appendix \ref{app:examples}, where we use Theorem \ref{theoremalphaxi} to demonstrate the change in the risk preferences and also analyze a single trajectory in a more practical context within one of our experiments. These examples give a clear intuition behind the temporal adaptation of the risk measure.

\begin{example}
\label{ex:ex1}
To illustrate the calculations of the conditional risk measures, consider the Markov process in Figure \ref{fig:mdp}, where the number on the edges and nodes represent the transition probabilities and rewards, respectively. In this example, we use $\gamma=0.5$. For instance, the trajectory $(x_0, x_{1}^{1}, x_{2}^{3})$ has the reward of $ 9 = 2 + 0.5\cdot4 + 0.5^2\cdot20$ and the probability of $0.6\cdot0.2 = 0.12$. 
\begin{figure}[!ht]
\begin{center}
\definecolor{ggg}{RGB}{131, 179, 102}
\definecolor{gg}{RGB}{213, 233, 213}
\definecolor{bbb}{RGB}{108, 143, 192}
\definecolor{bb}{RGB}{217, 232, 252}
\begin{tikzpicture} [
    node distance = 1cm,
    scale=0.6,
    transform shape,
    on grid,
    auto,
    every state/.style = {
        circle split,
        draw = black, 
        text = black,
        circular drop shadow,
        fill = gray!20,
        minimum size = 40pt
    },
    every initial by arrow/.style = {
        draw = black,
        thick,-stealth
    }
]
 
\node (s0) [state,draw=black,fill=gray!20,] {$x_0$ \nodepart{lower} $2$};
\node (s1) [state,draw=ggg,fill=gg,below left = 1.8cm and 3cm of s0] {$x_{1}^{1}$ \nodepart{lower} $4$};
\node (s2) [state,draw=bbb,fill=bb, below right = 1.8cm and 3cm of s0] {$x_{1}^{2}$ \nodepart{lower} $6$};
\node (s11) [state,draw=ggg,fill=gg, below left = 2.3cm and 1.8cm of s1] {$x_{2}^{1}$ \nodepart{lower} $4$};
\node (s12) [state,draw=ggg,fill=gg, below = 2.3cm of s1] {$x_{2}^{2}$ \nodepart{lower} $16$};
\node (s13) [state,draw=ggg,fill=gg, below right = 2.3cm and 1.8cm of s1] {$x_{2}^{3}$ \nodepart{lower} $20$};
\node (s21) [state,draw=bbb,fill=bb, below left = 2.3cm and 1.8cm of s2] {$x_{2}^{4}$ \nodepart{lower} $4$};
\node (s22) [state,draw=bbb,fill=bb, below = 2.3cm of s2] {$x_{2}^{5}$ \nodepart{lower} $8$};
\node (s23) [state,draw=bbb,fill=bb, below right = 2.3cm and 1.8cm of s2] {$x_{2}^{6}$ \nodepart{lower} $20$};
 
\path [-stealth, thick,text =black]
    (s0) edge [] node[fill=white, anchor=center, pos=0.5,font=\small\bfseries] {$0.6$} (s1)
    (s0) edge [] node[fill=white, anchor=center, pos=0.5,font=\small\bfseries] {$0.4$} (s2)
    (s1) edge [] node[fill=white, anchor=center, pos=0.5,font=\small\bfseries] {$0.5$} (s11)
    (s1) edge [] node[fill=white, anchor=center, pos=0.5,font=\small\bfseries] {$0.3$} (s12)
    (s1) edge [] node[fill=white, anchor=center, pos=0.5,font=\small\bfseries] {$0.2$} (s13)
    (s2) edge [] node[fill=white, anchor=center, pos=0.5,font=\small\bfseries] {$0.4$} (s21)
    (s2) edge [] node[fill=white, anchor=center, pos=0.5,font=\small\bfseries] {$0.3$} (s22)
    (s2) edge [] node[fill=white, anchor=center, pos=0.5,font=\small\bfseries] {$0.3$} (s23);
\end{tikzpicture}
\end{center}
\caption{{\small A Markov process with the transition probabilities and rewards denoted on the edges and nodes. This process can also be considered as an MDP with a deterministic policy $\pi$. In this way, the number in each node denotes the $r(x, \pi(x))$.}}
\label{fig:mdp}
\end{figure}

Suppose the risk measure has the following form:
\[
\rho(G) = 0.7\cdot\operatorname{CVaR}_{0.4}(G) + 0.3\cdot\operatorname{CVaR}_{0.8}(G)
\]

Here, a direct calculation of the risk measure shows that $\operatorname{CVaR}_{0.4}(G)=5.25$ and $\operatorname{CVaR}_{0.8}(G)=6.375$, therefore we have
\[
\rho(G)=0.7\cdot5.25 + 0.3\cdot6.375 = 5.5875.
\]

We now apply the Decomposition theorem to compute this value using conditional risk measures and clarify the notation used in this theorem. The first step involves analyzing how the risk levels $\alpha$ and the weights of each $\operatorname{CVaR}_{\alpha}$ evolve at $t=1$. Table \ref{tab:table1} presents these calculations, where  $\xi^\alpha$ is the optimal dual variable to compute the CVaR at level $\alpha$ satisfying $\mathbb{E}[\tilde{\xi}]=1$ and $0\leq \xi^\alpha \leq 1/\alpha$, i.e. $\mathbb{E}\left[\xi^\alpha Z\right]=\operatorname{CVaR}_\alpha(Z)$. Additionally, we have  $\xi_{t}^{\alpha}=\mathbb{E}\left[\xi^\alpha \mid \mathcal{F}_t\right]$ and $\xi=\int_0^1 \xi_{t}^{\alpha} \mu(\mathrm{d}\alpha) = 0.7\cdot\xi^{0.4}_t + 0.3\cdot\xi^{0.8}_t$, and the updated risk levels and weights are computed using $\alpha\xi_{t}^{\alpha}$ and $\xi_{t}^\alpha\mu(\mathrm{d}\alpha)/\xi$.

\begin{table}[!ht]
    \small\centering
    \caption{{\small Details used to compute the conditional risk measures.}}
    \resizebox{\linewidth}{!}{
    \begin{tabular}{llll"llllll} 
    \toprule
     $p_G(z)$ & $G$ & $\xi^{0.4}$ & $\xi^{0.8}$ & $X_1$ & $p_{G_t}(z)$ & $G_t$ & $\xi^{0.4}_t$ & $\xi^{0.8}_t$ & $\xi$ \\ [0.5ex] 
     \midrule
     30\%  & 5     & 2.5   & 1.25  & \multirow{3}[0]{*}{$x_1^1$} & 50\%  & 6     & \multirow{3}[0]{*}{1.25} & \multirow{3}[0]{*}{1.0833} & \multirow{3}[0]{*}{1.2}  \\
    18\%  & 8     & 0     & 1.25  & & 30\%  & 12    &       &       &        \\
    12\%  & 9     & 0     & 0.41667 & & 20\%  & 14    &      &       &        \\
    \cmidrule{5-10}
    16\%  & 6     & 1.5625 & 1.25 & \multirow{3}[0]{*}{$x_1^2$}  & 40\%  & 8     & \multirow{3}[0]{*}{0.625} & \multirow{3}[0]{*}{0.875} & \multirow{3}[0]{*}{0.7}  \\
    12\%  & 7     & 0     & 1.25  & & 30\%  & 10    &       &       &        \\
    12\%  & 10    & 0     & 0     & & 30\%  & 16    &       &       &        \\
    \bottomrule
    \end{tabular}}
    \label{tab:table1}
\end{table}

For instance, in state $x_1^1$, we have $\xi_{t}^{0.4}=1.25$ and $\xi_{t}^{0.8}=1.0833$. With $\xi = 0.7\cdot1.25 + 0.3\cdot1.0833 = 1.2$, the conditional risk measure at this state is calculated as
\begin{align}
    \rho_{\xi}(G_t\mid &X_0=x_{1}^{1} ) \nonumber \\
    & = \frac{1.25}{1.2}\cdot\operatorname{CVaR}_{0.5}(G_t) + \frac{1.0833}{1.2}\cdot\operatorname{CVaR}_{0.86}(G_t)\nonumber\\
    & = 0.73\cdot\operatorname{CVaR}_{0.5}(G_t) + 0.27\cdot\operatorname{CVaR}_{0.86}(G_t) \nonumber\\
    &  = 0.73\cdot6 + 0.27\cdot8.69 = 6.73 \nonumber
\end{align}

Similarly for $x_1^2$, we have
\begin{align}
    \rho_{\xi}(G_t\mid &X_0=x_{1}^{2} ) \nonumber\\
    & = 0.625\cdot\operatorname{CVaR}_{0.25}(G_t) + 0.375\cdot\operatorname{CVaR}_{0.7}(G_t) \nonumber\\
    & = 0.625\cdot8 + 0.375\cdot8.86 = 8.32. \nonumber
\end{align}

Recall that $\rho(G)=\mathbb{E}[\xi \cdot \rho_{\xi}\left(G \mid \mathcal{F}_t\right)]$. Thus, to compute $\rho(G)$, the probabilities of reaching $x_1^1$ or $x_1^2$ are reweighted using $\xi$. Moreover, since the return distribution of the initial state, conditioned on being at $t=1$, is given by $s_t + c_t G_t$, with $s_t=2$ and $c_t=0.5$, we obtain the same value for $\rho(G)$:  
\[
\rho(G) = 2 + 0.5\cdot(0.6\cdot1.2\cdot6.73 + 0.4\cdot0.7\cdot8.32) = 5.5875
\]

Now, consider the goal of analyzing the evolution of the risk measure over time. With this goal in mind, we only need to calculate the updated risk levels and weights using $\xi_{t}^{\alpha}$. Theorem \ref{theoremalphaxi} demonstrates that, rather than directly computing $\xi_{t}^{\alpha}$ using $\mathbb{E}\left[\xi^\alpha \mid \mathcal{F}_t\right]$, we can leverage the CDF of return variables, as outlined in Equations \ref{eq:alphaxi3} and \ref{eq:alphaxi4}, to perform the calculations. For $X_1 = x_{1}^{1}$, we have:  
\begin{align*}
    \xi_{t}^{0.4}& =  F_{G_t}(\frac{6 - 2}{0.5})/0.4 = F_{G_t}(8)/0.4 = 0.5/0.4 =1.25, \\
    \xi_{t}^{0.8}& = F_{G_t}(\frac{9 - 2}{0.5})/0.8 - \frac{p_{G_t}(\frac{9 - 2}{0.5})}{p_{G}(9)}(F_G(9) - 0.8)/0.8 \\
    & = 1/0.8 -  \frac{0.2}{0.12}(0.88 - 0.8)/0.8 = 1.0833,
\end{align*}

and for $X_1=x_{1}^{2}$, we have:
\begin{align*}
    \xi_{t}^{0.4}& = F_{G_t}(\frac{6 - 2}{0.5})/0.4 - \frac{p_{G_t}(\frac{6 - 2}{0.5})}{0.16}(F_G(6) - 0.4)/0.4 \\
    & = 0.4/0.4 -  \frac{0.4}{0.16}(0.46 - 0.4)/0.4 =  0.25/0.4 = 0.625,\\
    \xi_{t}^{0.8} & =  F_{G_t}(\frac{9 - 2}{0.5})/0.8 = F_{G_t}(14)/0.8 = 0.7/0.8 = 0.875.
\end{align*}

\begin{figure}[!ht]
    \centering
    \subfigure[The Quantile function]{
        \begin{minipage}{0.47\linewidth}
            \centering
            \begin{tikzpicture}[scale=0.45]
            \definecolor{ggg}{RGB}{131, 179, 102}
            \definecolor{gg}{RGB}{213, 233, 213}
            \definecolor{bbb}{RGB}{108, 143, 192}
            \definecolor{bb}{RGB}{217, 232, 252}
            \begin{axis}[
                grid=both,
                minor x tick num=3,
                minor y tick num=4,
                grid style={line width=.1pt, draw=gray!20},
                major grid style={line width=.2pt,draw=gray!50},
                clip=false,
                jump mark left,
                ymin=0,ymax=18,
                xmin=0, xmax=1,
                every axis plot/.style={thick},
                discontinuous,
                table/create on use/cumulative distribution/.style={
                    create col/expr={\pgfmathaccuma + \thisrow{f(x)}}   
                }
            ]
            \addplot [ggg] table[row sep=crcr]{
            x f(x)\\
            0 6\\
            0.5 12 \\
            0.8 14\\
            1 14\\
            };
            \addplot [bbb] table[row sep=crcr]{
            x f(x)\\
            0 8\\
            0.4 10\\
            0.7 16 \\
            1 16\\
            };
            \addplot [black] table[row sep=crcr]{
            x f(x)\\
            0 5\\
            0.3 6\\
            0.46 7 \\
            0.58 8\\
            0.76 9\\
            0.88 10\\
            1 10\\
            };
            \end{axis}
            \end{tikzpicture}
        \end{minipage}
        \label{fig:circle}
    }
    \hfill
    \subfigure[The CDF]{
        \begin{minipage}{0.47\linewidth}
            \centering
            \begin{tikzpicture}[scale=0.45]
            \definecolor{ggg}{RGB}{131, 179, 102}
            \definecolor{gg}{RGB}{213, 233, 213}
            \definecolor{bbb}{RGB}{108, 143, 192}
            \definecolor{bb}{RGB}{217, 232, 252}
            \begin{axis}[
                grid=both,
                grid style={line width=.1pt, draw=gray!20},
                major grid style={line width=.2pt,draw=gray!50},
                minor y tick num=3,
                minor x tick num=4,
                clip=false,
                jump mark left,
                ymin=0,ymax=1,
                xmin=0, xmax=18,
                every axis plot/.style={very thick},
                discontinuous,
                table/create on use/cumulative distribution/.style={
                    create col/expr={\pgfmathaccuma + \thisrow{f(x)}}   
                }
            ]
            \addplot [ggg] table [y=cumulative distribution, row sep=crcr]{
            x f(x)\\
            0 0\\
            6 0.5\\
            12 0.3\\
            14 0.2\\
            18 0\\
            };
            \addplot [bbb] table [y=cumulative distribution, row sep=crcr]{
            x f(x)\\
            0 0\\
            8 0.4\\
            10 0.3\\
            16 0.3\\
            18 0\\
            };
            \addplot [black] table [y=cumulative distribution, row sep=crcr]{
            x f(x)\\
            0 0\\
            5 0.3\\
            6 0.16\\
            7 0.12\\
            8 0.18\\
            9 0.12\\
            10 0.12\\
            18 0\\
            };
            \end{axis}
            \end{tikzpicture}
        \end{minipage}
        \label{fig:rectangle}
    }
    \caption{{\small The Quantile function and the CDF of the return-distributions in states $x_0$ (black), $x_1^1$ (green), and $x_1^2$ (blue) in Example \ref{ex:ex1}.}}
\end{figure}

\end{example}

\section{Experimental Results}
\label{results}
In this section, we study our model's performance with four examples. First, we start with the American Option Trading environment, commonly used in the RSRL literature \citep{Tamar.etal2017,Chow.Ghavamzadeh2014a,Lim.Malik2022}, followed by the Mean-reversion trading environment as outlined in the work by \citet{Coache.Jaimungal2023}. Finally, we tackle the more challenging Windy Lunar Lander environment. Ultimately, in Appendix \ref{app:n_quantile}, we also examine the effect of the number of quantiles on performance. The details of these environments are available in Appendix \ref{app:details}. 

For each experiment, we report the expected value and the risk-adjusted value of the discounted return. Additionally, we employ a diverse range of risk spectrums to derive policies in our algorithm. We denote this approach as QR-SRM(\( \phi \)), where \( \phi \) represents the risk spectrum, with subscripts indicating the functional form. The specific cases are as follows:
\begin{itemize}[noitemsep, topsep=0pt]
    \item QR-SRM(\( \phi_\alpha \)): CVaR with \( \phi_{\alpha}(u) = \frac{1}{\alpha} \mathds{1}_{[0, \alpha]}(u) \),
    \item QR-SRM(\( \phi_{\vec{\alpha}, \vec{w}} \)): Weighted Sum of CVaRs (WSCVaR) with \( \phi_{\vec{\alpha}, \vec{w}}(u) = \sum_i w_i \frac{1}{\alpha_i} \mathds{1}_{[0, \alpha_i]}(u) \),
    \item QR-SRM(\( \phi_\lambda \)): Exponential risk measure (ERM) with \( \phi_\lambda(u) = \frac{\lambda e^{-\lambda u}}{1 - e^{-\lambda}} \),
    \item QR-SRM(\( \phi_\nu \)): Dual Power risk measure (DPRM) with \( \phi_{\nu}(u) = \nu (1 - u)^{\nu - 1} \).
\end{itemize}

\subsection{American Put Option Trading}
In this environment, we assume that the price of the underlying asset follows a Geometric Brownian Motion and at each time step, the the option-holder can either exercise or hold the option. For this example, we selected QR-SRM(\( \phi_\alpha \)) with $\alpha\in\{0.2,0.6,1.0\}$. The distribution of option payoff is displayed in Figure \ref{fig:sub11}. In this figure, the solid, dashed, and dotted vertical lines depict the $\operatorname{CVaR}_{1.0}(G)$, $\operatorname{CVaR}_{0.6}(G)$, and $\operatorname{CVaR}_{0.2}(G)$ for each of these distributions, respectively. We can see that QR-SRM(\( \phi_{\alpha=1.0} \)), QR-SRM(\( \phi_{\alpha=0.6} \)), and QR-SRM(\( \phi_{\alpha=0.2} \)) successfully finds the policy with the highest $\operatorname{CVaR}_{1.0}(G)$, $\operatorname{CVaR}_{0.6}(G)$ and $\operatorname{CVaR}_{0.2}(G)$, respectively. The exercise boundary of each policy, as depicted in Figure \ref{fig:sub12}, also shows that as $\alpha$ decreases from $1.0$ to $0.6$ and then to $0.2$, the policy becomes more conservative and the agent exercises the option sooner, leading to higher $\operatorname{CVaR}_{0.2}(G)$ but lower $\operatorname{CVaR}_{1.0}(G)$. 

\begin{figure}[!ht]%
\centering
\subfigure[]{%
\label{fig:sub11}%
\includegraphics[width=0.5\linewidth]{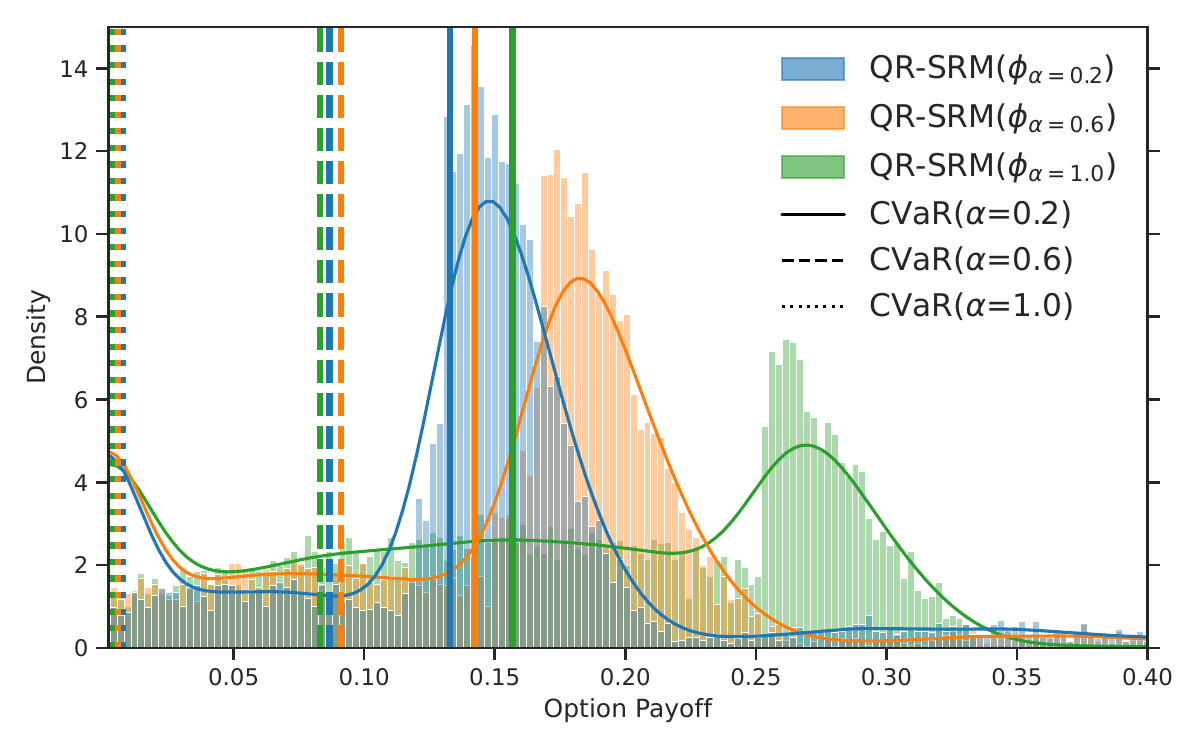}}%
\subfigure[]{%
\label{fig:sub12}%
\includegraphics[width=0.5\linewidth]{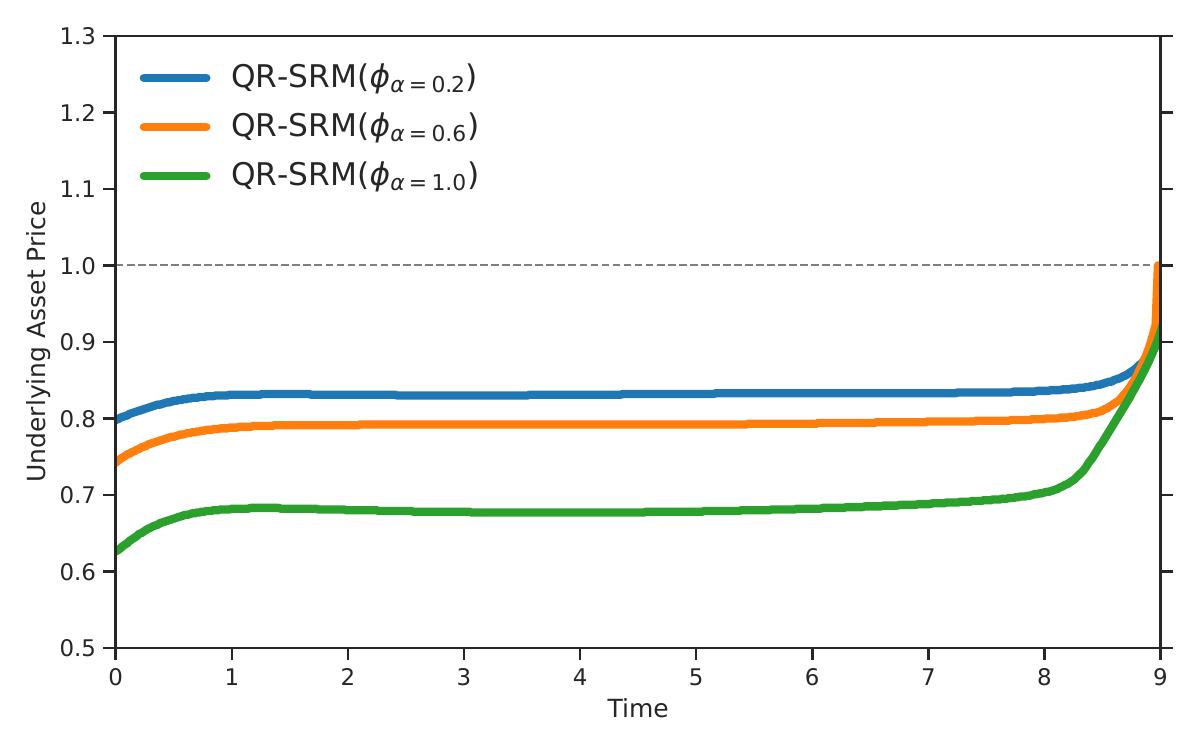}}%
\caption{{\small Figure \ref{fig:sub11} illustrates the distribution of discounted returns for different policies. Figure \ref{fig:sub12} demonstrates the exercise boundary of each policy.}}
\end{figure}

\subsection{Mean-reversion Trading Strategy}
\label{sec:meanreversion}
In the algorithmic trading framework, the asset price follows a mean-reverting process and the agent can buy or sell the asset to earn reward. To showcase our model's performance and its versatility to employ a diverse range of risk spectrums for policy derivation, we employ QR-SRM(\( \phi_{\lambda=12.0} \)), QR-SRM(\( \phi_{\nu=4.0} \)), and QR-SRM(\( \phi_{\vec{\alpha}_2, \vec{w}_2} \)) with $\vec{\alpha}_2$=$[0.1,0.6,1.0]$ and $\vec{w}_2$=$[0.2,0.3,0.5]$. In Figure \ref{fig:sub21}, the solid, dashed, and dotted vertical lines depict the $\operatorname{WSCVaR}_{\vec{\alpha}_2}^{\vec{w}_2}(G)$, $\operatorname{ERM}_{12.0}(G)$, and $\operatorname{DPRM}_{4.0}(G)$ for each of these distributions, respectively. 
This figure demonstrates that our model effectively handles more complex risk measures and identifies the policy with the highest $\operatorname{SRM}(G)$. 

\begin{figure}[!ht]%
\centering
\subfigure[]{%
\label{fig:sub21}%
\includegraphics[width=0.5\linewidth]{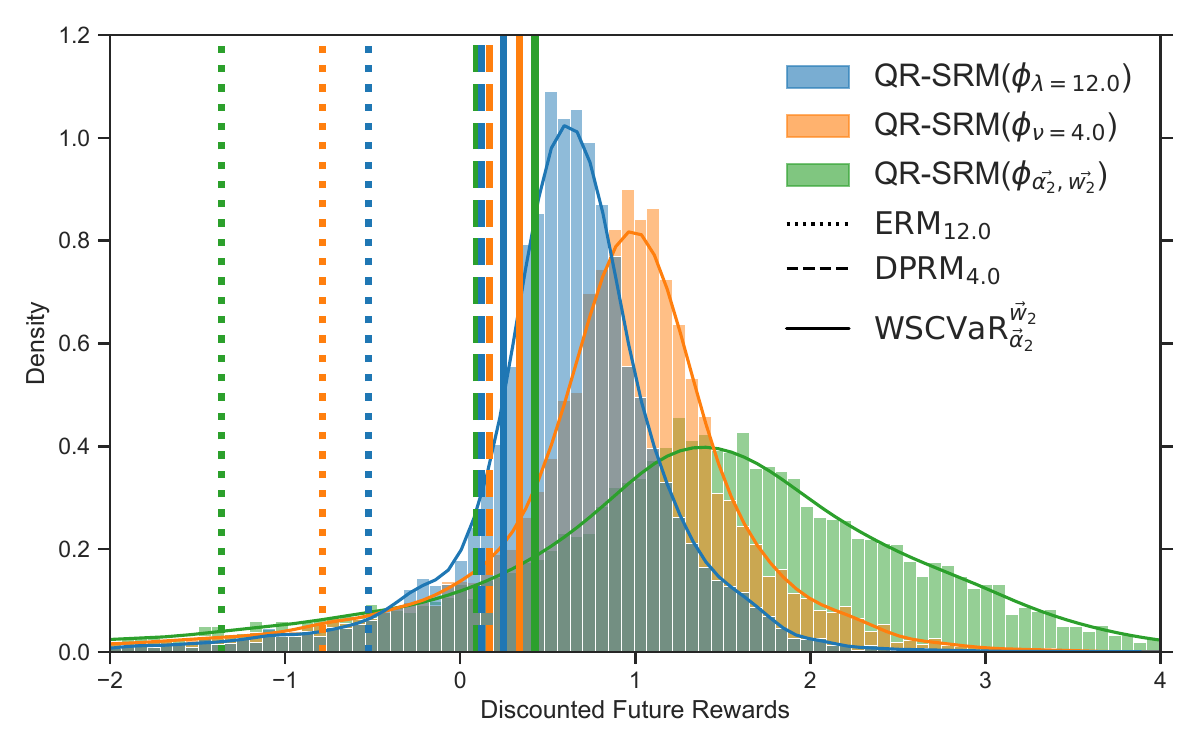}}%
\subfigure[]{%
\label{fig:sub22}%
\includegraphics[width=0.5\linewidth]{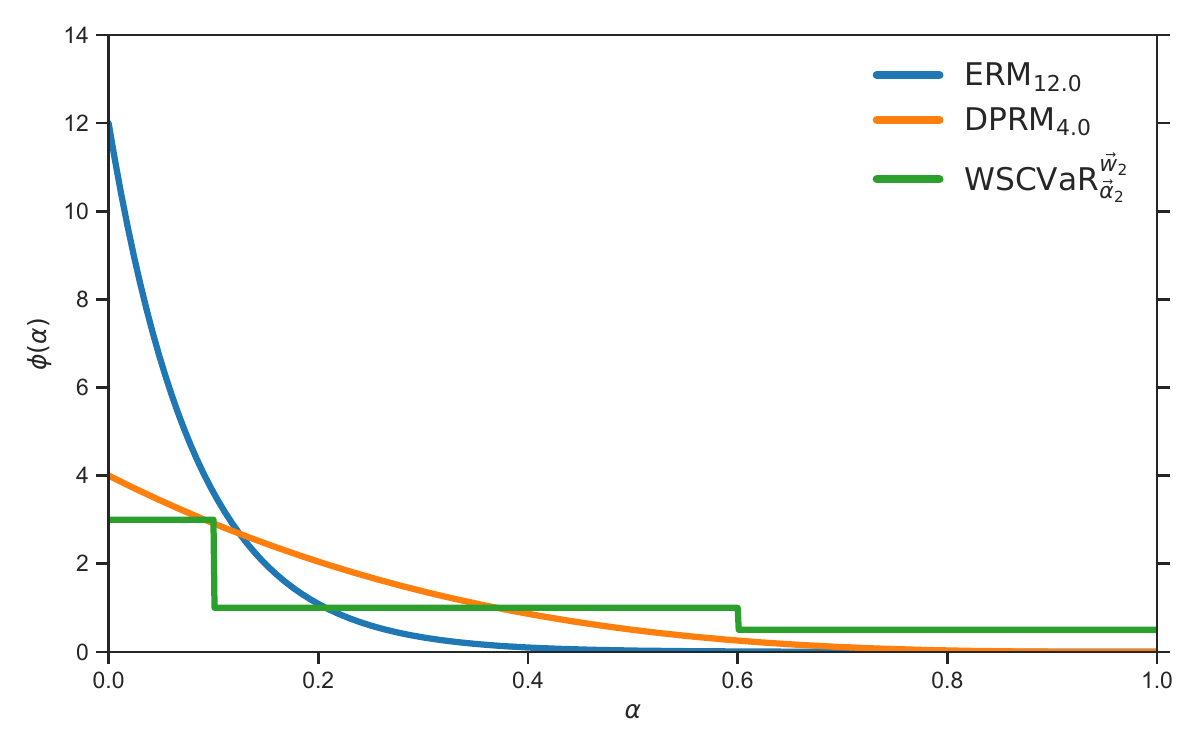}}%
\caption{{\small Figure \ref{fig:sub21} illustrates the distribution of discounted returns for different policies. Figure \ref{fig:sub22} displays the risk spectrums used to derive these policies.}}
\end{figure}

For this example, we also conduct a comparative evaluation of our model against the risk-neutral QR-DQN model, its risk-sensitive variant introduced in \citet{Bellemare.etal2023} for static CVaR, and a model with risk-sensitive action selection similar to the approach proposed by \citet{Dabney.etal2018b}. We refer to these two models as QR-CVaR and QR-iCVaR. As the QR-iCVaR with $\alpha$=1 is identical to the QR-DQN model, only the results of one of them are displayed. 

 The first three columns of Table \ref{tab:comparison} represent the risk-adjusted values w.r.t $\operatorname{CVaR}_\alpha$ metric with $\alpha\in\{1.0, 0.5, 0.2\}$. As expected, QR-SRM with CVaR as the risk measure and QR-CVaR exhibit similar performances. On the contrary, QR-iCVaR shows sub-optimal results for $\alpha$=$0.5$. Even for $\alpha$ values of $0.2$ and $1.0$, where the average risk-adjusted values of all three models are close, QR-iCVaR achieves lower risk-adjusted value w.r.t other risk measures. Also, a comparison between QR-SRM(\( \phi_{\alpha=1.0} \)) and QR-DQN shows that our model discovers superior policies w.r.t various risk measures. 
 
\begin{table*}[!ht]
\caption{{\small The performance of our model against the QR-DQN, QR-CVaR, and QR-iCVaR models. Bold numbers represent the highest average score with respect to a risk measure. The $\pm$ symbol indicates the standard deviation across seeds.}}
\label{tab:comparison}
\centering
\resizebox{0.88\textwidth}{!}{
\begin{tabular}{llllllll}
\toprule
Model  &   $\operatorname{CVaR}_{1.0}$ 
&$\operatorname{CVaR}_{0.5}$ 
&$\operatorname{CVaR}_{0.2}$ & $\operatorname{ERM}_{4.0}$ & $\operatorname{DPRM}_{2.0}$ &  $\operatorname{WSCVaR}_{\vec{\alpha}_2}^{\vec{w}_2}$ &$\operatorname{WSCVaR}_{\vec{\alpha}_3}^{\vec{w}_3}$\\
\midrule
QR-SRM(\( \phi_{\alpha=1.0} \))         & 1.43$\pm$0.03 &  0.03$\pm$0.04 & -1.36$\pm$0.09 & -0.37$\pm$0.05 & 0.43$\pm$0.02 & 0.35$\pm$0.02 & 0.04$\pm$0.04 \\
QR-CVaR($\alpha$=1.0)        & \textbf{1.48$\pm$0.07} & -0.02$\pm$0.10 & -1.42$\pm$0.21 & -0.41$\pm$0.13 & 0.40$\pm$0.07 & 0.35$\pm$0.08 & 0.03$\pm$0.10 \\
QR-DQN                       & 1.40$\pm$0.09 & -0.24$\pm$0.17 & -1.76$\pm$0.27 & -0.67$\pm$0.19 & 0.21$\pm$0.14 & 0.16$\pm$0.14 & -0.18$\pm$0.16 \\
\hline 
QR-SRM(\( \phi_{\alpha=0.5} \))         & 0.78$\pm$0.02 & 0.27$\pm$0.03 & -0.44$\pm$0.05 & 0.00$\pm$0.04 & 0.38$\pm$0.02 & 0.29$\pm$0.03 & 0.17$\pm$0.03 \\
QR-CVaR($\alpha$=0.5)        & 0.79$\pm$0.08 & \textbf{0.28$\pm$0.02} & -0.41$\pm$0.11 & 0.02$\pm$0.05 & 0.40$\pm$0.03 & 0.31$\pm$0.02 & 0.19$\pm$0.03 \\
QR-iCVaR($\alpha$=0.5)       & 0.82$\pm$0.17 & 0.14$\pm$0.04 & -0.36$\pm$0.09 & -0.00$\pm$0.02 & 0.32$\pm$0.07 & 0.33$\pm$0.07 & 0.23$\pm$0.05 \\
\hline 
QR-SRM(\( \phi_{\alpha=0.2} \))         & 0.56$\pm$0.08 & 0.21$\pm$0.03 & -0.21$\pm$0.04 & \textbf{0.05$\pm$0.01} & 0.29$\pm$0.05 & 0.24$\pm$0.04 & 0.17$\pm$0.02 \\
QR-CVaR($\alpha$=0.2)        & 0.64$\pm$0.06 & 0.24$\pm$0.03 & -0.27$\pm$0.03 & \textbf{0.05$\pm$0.01} & 0.33$\pm$0.03 & 0.26$\pm$0.03 & 0.19$\pm$0.02 \\
QR-iCVaR($\alpha$=0.2)       & 0.40$\pm$0.04 & 0.04$\pm$0.01 & \textbf{-0.17$\pm$0.03} & -0.00$\pm$0.01 & 0.13$\pm$0.02 & 0.16$\pm$0.02 & 0.11$\pm$0.02 \\
\hline 
QR-SRM(\( \phi_{\lambda=4.0} \))        & 0.84$\pm$0.05 & \textbf{0.28$\pm$0.02} & -0.37$\pm$0.07 & \textbf{0.05$\pm$0.03} & 0.42$\pm$0.02 & 0.35$\pm$0.02 & 0.23$\pm$0.03 \\
QR-SRM(\( \phi_{\nu=2.0} \))            & 1.01$\pm$0.09 & 0.27$\pm$0.02 & -0.62$\pm$0.14 & -0.03$\pm$0.06 & 0.46$\pm$0.02 & 0.36$\pm$0.01 & 0.19$\pm$0.03 \\
QR-SRM(\( \phi_{\vec{\alpha}_2, \vec{w}_2} \)) & 1.06$\pm$0.11 & 0.26$\pm$0.03 & -0.57$\pm$0.14 & -0.01$\pm$0.05 & 0.47$\pm$0.03 & \textbf{0.40$\pm$0.02} & \textbf{0.24$\pm$0.02} \\
QR-SRM(\( \phi_{\vec{\alpha}_3, \vec{w}_3} \))     & 1.11$\pm$0.08 & 0.26$\pm$0.04 & -0.68$\pm$0.19 & -0.05$\pm$0.08 & \textbf{0.48$\pm$0.02} & \textbf{0.40$\pm$0.03} & 0.22$\pm$0.06 \\
\bottomrule
\end{tabular}}
\end{table*}

 For $\operatorname{ERM}_{4.0}$, $\operatorname{DPRM}_{2.0}$, and $\operatorname{WSCVaR}_{\vec{\alpha}_2}^{\vec{w}_2}$, our model can identify top-performing policies. Furthermore, we train a QR-SRM(\( \phi_{\vec{\alpha}_3, \vec{w}_3} \)) algorithm with $\vec{\alpha}_3$=$[0.2,1.0]$ and $\vec{w}_3$=$[0.5,0.5]$. Compared to QR-SRM(\( \phi_{\alpha=1.0} \)) and QR-SRM(\( \phi_{\alpha=0.2} \)) models, the results of this policy demonstrate the possibility of increasing the performance w.r.t to one risk measure at the expense of decreasing the performance w.r.t to another in our model.

\subsection{Windy Lunar Lander}
To evaluate our algorithm in a more complex environment, we utilize the windy Lunar Lander environment. The combination of the larger state and action spaces, along with the stochasticity in the transitions, makes this environment particularly challenging for training. As shown in Table \ref{tab:lunar}, QR-SRM(\( \phi_{\alpha=1.0} \)) performs slightly worse than QR-DQN, but the difference is within a standard deviation. This is likely due to the state augmentation used in our model. We also observed unusually low scores for the QR-CVaR algorithm, which were traced back to poor performance in 3 out of 5 seeds. Additionally, QR-iCVaR under-performed compared to QR-SRM at the same risk levels, suggesting that using a fixed risk measure, as in \citet{Dabney.etal2018a}, can lead to sub-optimal performance. 

Several factors can contribute to these discrepancies between the objective and the evaluated performance. These include the use of function approximation for value functions and the inherent stochasticity of the environment. However, a particularly significant factor for CVaR is the fact that these objectives focus exclusively on the left tail of the distribution, overlooking valuable information in the right tail. As a result, these algorithms are more likely to converge to sub-optimal policies. This limitation, commonly referred to as \say{Blindness to Success} \cite{Greenberg.etal2022a}, is a well-known issue with CVaR-based approaches.

A remedy to this situation, enabled by our model, is assigning a weight to the expected value. The results for the QR-SRM(\( \phi_{\vec{\alpha}_3, \vec{w}_3} \)) model show that this agent can not only achieve the highest $\operatorname{WSCVaR}_{\vec{\alpha}_3}^{\vec{w}_3}$ but also improve the $\operatorname{CVaR}_{0.2}$ and $\operatorname{CVaR}_{0.5}$ without a great impact on the expected return. The results of this experiment show the impact of having a model with a flexible objective that can adapt to the environment.

\begin{table}[!ht]
  \caption{{\small The performance of our model against the QR-DQN, QR-CVaR, and QR-iCVaR models. Bold numbers represent the highest average score with respect to a risk measure. The $\pm$ symbol indicates the standard deviation across seeds.}}
  \label{tab:lunar}
  \centering
  \resizebox{0.95\linewidth}{!}{
\begin{tabular}{lllll}
    \toprule
    Model & $\mathbb{E}$ & $\operatorname{CVaR}_{0.5}$ & $\operatorname{CVaR}_{0.2}$ & $\operatorname{WSCVaR}_{\vec{\alpha}_3}^{\vec{w}_3}$\\
    \midrule
    QR-SRM(\( \phi_{\alpha=1.0} \)) & 27.36$\pm$12.85 & 10.17$\pm$14.07 & -8.07$\pm$17.36 & 9.82$\pm$14.75 \\
    QR-CVaR($\alpha$=1.0) & 14.52$\pm$1.91 &   -1.51$\pm$3.73 &  -15.58$\pm$5.73 &   -0.45$\pm$3.24 \\
    QR-DQN & \textbf{32.73$\pm$8.98} & 3.79$\pm$13.72 & -24.79$\pm$23.39 & 4.23$\pm$15.99 \\
    \hline 
    QR-SRM(\( \phi_{\alpha=0.5} \)) & 23.25$\pm$12.58 & 4.12$\pm$12.64 & -12.26$\pm$11.67 & 5.67$\pm$12.05 \\
    QR-CVaR($\alpha$=0.5) & 10.77$\pm$5.34 & -6.26$\pm$8.44 & -21.07$\pm$13.08 & -5.08$\pm$9.13 \\
    QR-iCVaR($\alpha$=0.5) & 21.72$\pm$24.09 & -1.33$\pm$29.33 & -22.37$\pm$38.35 & -0.02$\pm$30.97 \\
    \hline 
    QR-SRM(\( \phi_{\alpha=0.2} \)) & 22.28$\pm$12.66 & 3.37$\pm$13.09 & -13.51$\pm$14.54 & 4.57$\pm$13.53 \\
    QR-CVaR($\alpha$=0.2) & 11.85$\pm$6.73 & -4.78$\pm$6.16 & -19.09$\pm$4.19 & -3.53$\pm$5.32 \\
    QR-iCVaR($\alpha$=0.2) & 18.53$\pm$22.07 & -2.69$\pm$27.34 & -18.82$\pm$31.68 & 0.04$\pm$26.80 \\
    \hline 
    QR-SRM(\( \phi_{\vec{\alpha}_3, \vec{w}_3} \)) & 31.70$\pm$11.21 & \textbf{14.17$\pm$13.46} & \textbf{-2.85$\pm$20.63} & \textbf{14.57$\pm$15.22} \\
    \bottomrule
\end{tabular}}
\end{table}

\section{Conclusion}
\label{conclusion}
In this paper, we introduced a novel DRL algorithm with convergence guarantees designed to optimize the static SRM. Our empirical evaluations demonstrate the algorithm’s ability to learn policies aligned with SRM objectives, achieving superior performance compared to existing methods in a variety of risk-sensitive scenarios. The advantage of using static SRM extends beyond performance; it also enhances interpretability. We showed that by applying the Decomposition Theorem of coherent risk measures and leveraging the return distribution available in the DRL framework, we can identify the specific objective that the optimal policy is optimizing for. This allows for monitoring the policy's behavior and risk sensitivity, and adjusting it if necessary.

A few limitations of our work that can pave the way for future research are as follows: i) Our value-based method is suitable for environments with discrete action spaces. The extension of our algorithm to actor-critic methods can make our approach available in environments with continuous action spaces. ii) In this work, we parameterized the return distribution with the quantile representation. Using other parametric approximations of the distribution \citep{Dabney.etal2018b,Yang.etal2019} or improvements that have been introduced for the quantile representation \citep{Zhou.etal2020a,Zhou.etal2021} can potentially improve the performance of our risk-sensitive algorithm. iii) The algorithm to update the function $h$, or equivalently the estimation of the initial state's return distribution, provides a lower bound for the objective. In section \ref{sec:meanreversion}, we empirically observed that our algorithm converges to policies similar to QR-CVaR, which has stronger convergence guarantees. However, an algorithm with stronger guarantees for convergence to the optimal function $h$ can enhance our understanding of static SRM.


\section*{Impact Statement}

This work advances Machine Learning by providing a principled approach to handling worst-case scenarios in sequential decision-making. We do not foresee any negative societal impacts. On the contrary, our method can enhance the safety and reliability of AI systems in high-stakes domains such as finance and healthcare, where effective risk management is critical. We believe this contribution supports the development of safer and more trustworthy machine learning applications.


\bibliography{MyLibrary}
\bibliographystyle{icml2025}

\newpage
\appendix
\onecolumn

\section{Property of the Closed-form solution}
\label{app:conjugate}
Using the SRM definition from Equation \ref{eq:srm_cvar}, we have
\[
\begin{aligned}
     \operatorname{SRM}_{\mu}(Z) = & \int_0^1 \operatorname{CVaR}_\alpha(Z) \mu(\mathrm{d}\alpha) \\
     & \stackrel{(a)}{=} \int_0^1  F_Z^{-1}(\alpha)+\frac{1}{\alpha}\mathbb{E}\left[\left(Z - F_Z^{-1}(\alpha)\right)^{-} \right]\mu(\mathrm{d}\alpha) \\
     & \stackrel{(b)}{=} \mathbb{E}\left[\int_0^1  F_Z^{-1}(\alpha)+\frac{1}{\alpha}\left(Z - F_Z^{-1}(\alpha)\right)^{-} \mu(\mathrm{d}\alpha)\right] \\
     &  = \mathbb{E}\left[h_{\phi, Z}(Z)\right]
\end{aligned}
\]
where step $(a)$ utilizes the CVaR representation provided in \citet{Rockafellar.Uryasev2000}, and step $(b)$ applies Fubini’s Theorem. Next, we note that \( h_{\phi, Z} \), as defined in Equation \ref{functionh}, is differentiable almost everywhere, with its derivative given by
\[
\begin{aligned}
h_{\phi, Z}^{\prime}(z) & =\int_{\left\{\alpha: z \leq F_Z^{-1}(\alpha)\right\}} \frac{1}{\alpha} \mu_\phi(\mathrm{d} \alpha) \\
& =\int_{F_Z(z)}^1 \frac{1}{\alpha} \mu_\phi(\mathrm{d} \alpha)=\phi\left(F_Z(z)\right).
\end{aligned}
\]
Additionally, the infimum in the concave conjugate \( \hat{h}_{\phi, Z}(\phi(u)) = \inf_z \left( \phi(u) \cdot z - h_{\phi, Z}(z) \right) \) is achieved at any \( z \) where \( \phi(u) = h_{\phi, Z}^{\prime}(z) = \phi\left(F_Z(z)\right) \), which corresponds to \( z = F_Z^{-1}(u) \). Therefore, we obtain
\[
\begin{aligned}
\int_0^1 \hat{h}_{\phi, Z}(\phi(u)) \mathrm{d} u & =\int_0^1 \phi(u) \cdot F_Z^{-1}(u) - h_{\phi, Z}\left(F_Z^{-1}(u)\right) \mathrm{d} u \\
 & =\int_0^1 \phi(u) \cdot F_Z^{-1}(u) \mathrm{d} u-\int_0^1 h_{\phi, Z}\left(F_Z^{-1}(u)\right) \mathrm{d} u \\
& =\operatorname{SRM}_\phi(Z)-\mathbb{E}\left[h_{\phi, Z}(Z)\right]\\
& = 0
\end{aligned}
\]

\section{Proof of Convergence for Inner Optimization}
\label{app:inner}
In this section, we aim to demonstrate the convergence of the inner optimization algorithm to the optimal policy associated with a fixed function $h_{l}$. Before discussing the main theorem, we must introduce several intermediate results about partial returns. Recall that $J(\pi, h_{l}) = \mathbb{E}\left[h_{l}\left(G^\pi\right)\right] +\int_0^1 \hat{h}_{l}(\phi(u)) \mathrm{d} u$. Since $\int_0^1 \hat{h}_{l}(\phi(u)) \mathrm{d} u= 0$, we define the mapping $V_{k,l}: \mathcal{X} \times \mathcal{S} \times \mathcal{C} \times \mathcal{A} \rightarrow \mathbb{R}$ as follows:
\begin{equation}
V_{k,l}(x, s, c, a)=\mathbb{E}\left[h_{l}\left(s + c G_{k,l}(x, s, c, a)\right)\right].
\end{equation}
Similarly for a policy $\pi_l \in \boldsymbol{\pi}_{\mathbf{M}}$, we define
\begin{equation}
V^{\pi_l}(x, s, c, a)=\mathbb{E}\left[h_{l}\left(s + c G^{\pi_l}(x, s, c, a)\right)\right] .
\end{equation}
The goal is to find an optimal deterministic policy $\pi_{l}^* \in \boldsymbol{\pi}_{\mathbf{M}}$ in the sense that
\begin{equation}
V^{\pi_{l}^*}(x, s, c, a)=\max _{\pi_l \in \boldsymbol{\pi}_{\mathbf{M}}} V^{\pi_l}(x, s, c, a) .
\end{equation}
With $\mathrm{a}_{k,l}(x, s, c)=\mathrm{a}_{G_k, h_{l}}(x, s, c)$, we have the following recursive property for  $V_{k,l}$ and $V^{\pi_l}$:
\begin{lemma}
\label{lemma1}
For each $(x,s,c,a) \in \mathcal{X} \times \mathcal{S} \times \mathcal{C} \times \mathcal{A}$, we have
\begin{equation}
V_{k+1, l}(x, s, c, a)= \mathbb{E}_{x s c a}\left[V_{k,l}\left(X^{\prime}, S^{\prime}, C^{\prime}, \mathrm{a}_{k,l}\left(X^{\prime}, S^{\prime}, C^{\prime}\right)\right)\right] .
\end{equation}
Additionally, for a policy $\pi_l \in \boldsymbol{\pi}_{\mathbf{M}}$, we have
\begin{equation}
V^{\pi_l}(x, s, c, a)= \mathbb{E}_{\pi_l, xsca}\left[V^{\pi_l}\left(X^{\prime}, S^{\prime}, C^{\prime}, A^{\prime}\right)\right]
\end{equation}
\end{lemma}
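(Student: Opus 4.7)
The plan is to prove both recursions by unfolding the definitions of $V_{k+1,l}$ and $V^{\pi_l}$, then applying the distributional Bellman recursion from Equation~\ref{eq:recursiveZ1} together with the transition structure of the extended MDP (recall $S' = s + cR$ and $C' = \gamma c$). The key algebraic identity driving both parts is
\[
s + c\bigl(r + \gamma G(x', s', c', a')\bigr) \;=\; (s+cr) + (c\gamma)\, G(x', s', c', a') \;=\; s' + c'\, G(x', s', c', a').
\]
This observation turns the ``cumulative-reward'' bookkeeping inside $h_l$ into a clean composition of the augmented state with the one-step-ahead return.

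For the first recursion, I would start from
\[
V_{k+1,l}(x,s,c,a) \;=\; \mathbb{E}\!\left[h_l\bigl(s + c\, G_{k+1,l}(x,s,c,a)\bigr)\right],
\]
substitute the distributional identity $G_{k+1,l}(x,s,c,a) \stackrel{\mathcal{D}}{=} R(x,a) + \gamma G_{k,l}(X', S', C', \mathrm{a}_{k,l}(X',S',C'))$, and apply the algebraic identity above to rewrite the argument of $h_l$ as $S' + C' G_{k,l}(X', S', C', \mathrm{a}_{k,l}(X',S',C'))$. Conditioning on $(X',S',C')$ and using the tower property of conditional expectation then yields
\[
V_{k+1,l}(x,s,c,a) \;=\; \mathbb{E}_{xsca}\!\left[\mathbb{E}\bigl[h_l(S' + C' G_{k,l}(X',S',C',\mathrm{a}_{k,l}(X',S',C')))\mid X',S',C'\bigr]\right],
\]
and the inner conditional expectation is exactly $V_{k,l}(X', S', C', \mathrm{a}_{k,l}(X',S',C'))$ by definition.

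For the second recursion, the argument is essentially identical but uses the standard distributional Bellman equation for a fixed Markov policy $\pi_l$ in the extended MDP: $G^{\pi_l}(x,s,c,a) \stackrel{\mathcal{D}}{=} R(x,a) + \gamma G^{\pi_l}(X',S',C',A')$ with $A' \sim \pi_l(\cdot\mid X', S', C')$. The same manipulation of the argument of $h_l$ and the same conditioning step deliver
\[
V^{\pi_l}(x,s,c,a) \;=\; \mathbb{E}_{\pi_l, xsca}\!\left[V^{\pi_l}(X', S', C', A')\right].
\]

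I do not expect a significant obstacle here: the result is essentially a change-of-variables identity plus the tower property, and the extended-state construction was designed precisely so that the aggregated argument $s + cG$ decomposes along transitions into $s' + c'G'$. The only care needed is to track that the action selection $\mathrm{a}_{k,l}(X',S',C')$ in the first part is a deterministic function of the augmented next state, so it can legitimately be pulled inside the conditional expectation without introducing any additional randomness beyond what $(X',S',C')$ already carries.
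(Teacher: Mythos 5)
Your proposal is correct and follows essentially the same route as the paper: both proofs hinge on the transition identity $s + c(r + \gamma z) = (s+cr) + (\gamma c)z = s' + c'z$ together with the tower property, the only cosmetic difference being that you apply the one-step distributional recursion of Equation~\ref{eq:recursiveZ1} directly while the paper unrolls $G_{k,l}$ into the explicit partial sum $\sum_{t=0}^{k}\gamma^t R_t$ before regrouping. Your closing remark about $\mathrm{a}_{k,l}$ being a deterministic function of the augmented next state is exactly the point that makes the conditioning step legitimate.
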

\begin{proof}
The proof for both equations follows similar steps, so we present the proof only for $V_{k+1, l}$. Consider a partial trajectory that starts with the sample transition $\left(X, S, C, A, R, X^{\prime}, S^{\prime}, C^{\prime}, A^{\prime}\right)$ and continues with $\left(X_t, S_t, C_t, A_t, R_t\right)_{t=0}^k$ in which $A_0=A^{\prime}=\mathrm{a}_{k,l}\left(X^{\prime}, S^{\prime}, 
C^{\prime}\right)$ and $A_t \sim \pi_{k-t, l}\left(\cdot \mid X_t, S_t, C_t\right), t\geq 1$. Since $S^{\prime}=S + CR$ and $C^{\prime}=\gamma C$, we can write\footnote{Note that $C_t$ is a degenerate random variable that only takes the value $\gamma^t$, therefore multiplying a random variable $Z$ by $C_t$ scales each realization by $\gamma^t$.}
\begin{align*}
\mathbb{E}_{x s c a} & {\left[V_{k,l}\left(X^{\prime}, S^{\prime}, C^{\prime}, A^{\prime}\right)\right] } \\
& =\mathbb{E}_{x s c a}\left[\mathbb{E}\left[h_{l}\left(S^{\prime} + C^{\prime}G_{k,l}\left(X^{\prime}, S^{\prime}, C^{\prime}, A^{\prime}\right)\right)\right]\right] \\
& =\mathbb{E}_{x s c a}\left[\mathbb{E}\left[h_{l}\left(S^{\prime} + C^{\prime}\sum_{t=0}^k \gamma^t R_t\right) \mid X_0=X^{\prime}, S_0=S^{\prime}, C_0=C^{\prime}, A_0=A^{\prime}\right]\right] \\
& =\mathbb{E}_{x s c a}\left[\mathbb{E}\left[h_{l}\left(s + cR + \gamma c\sum_{t=0}^k \gamma^t R_t\right) \mid X_0=X^{\prime}, S_0=S^{\prime}, C_0=C^{\prime}, A_0=A^{\prime}\right]\right] \\
& =\mathbb{E}_{x s c a}\left[\mathbb{E}\left[h_{l}\left(s + cR + c\sum_{t=1}^{k+1} \gamma^t R_t\right) \mid X_1=X^{\prime}, S_1=S^{\prime}, C_1=C^{\prime}, A_1=A^{\prime}\right]\right] \\
& =\mathbb{E}\left[h_{l}\left(s + c\sum_{t=0}^{k+1} \gamma^t R_t\right) \mid X_0=x, S_0=s, C_0=c, A_0=a\right]\\
& = V_{k+1, l}(x, s, c, a)
\end{align*}
\end{proof}
\begin{lemma}
\label{lemma2}
For each $(x,s,c,a) \in \mathcal{X} \times \mathcal{S} \times \mathcal{C} \times \mathcal{A}$ and return-distribution $\eta_{k,l}$ defined by Equation \ref{eq:recursiveZ1}, the associated $V_{k,l}(x, s, c, a)$ indicates the value of the optimal policy for partial return, i.e.:
\begin{equation}
\label{eq:eq1}
V_{k,l}(x, s, c, a)=\max_{\pi_l \in \boldsymbol{\pi}_{\mathbf{M}}} \mathbb{E}_{\pi_l,xsca}\left[h_{l}\left(s + c\sum_{t=0}^k \gamma^t R_t\right)\right] . 
\end{equation}
\end{lemma}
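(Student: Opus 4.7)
The plan is to prove the identity by induction on $k$, combining Lemma~\ref{lemma1} (the Bellman-style recursion for $V_{k,l}$) with a standard optimality-principle argument on the augmented MDP with extended state space $\mathcal{X}\times\mathcal{S}\times\mathcal{C}$. The intuition is that, because the greedy action $\mathrm{a}_{k,l}$ is already optimal with respect to $V_{k,l}$, unrolling Lemma~\ref{lemma1} one step at a time mirrors the usual dynamic-programming recursion for the value function of a finite-horizon MDP, while the Markov property of both the augmented MDP and the policy class $\boldsymbol{\pi}_{\mathbf{M}}$ lets one pull the policy-maximization inside a single expectation.

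First I would verify the base case directly from the definitions of $V_{0,l}$ and $G_{0,l}$: since the initial action $A_0=a$ is pinned down, there is no non-trivial policy optimization on the right-hand side, so both sides collapse to the same expectation. For the inductive step, assume the identity at level $k$ for every $(x,s,c,a)$. By Lemma~\ref{lemma1},
\begin{equation*}
V_{k+1,l}(x,s,c,a)=\mathbb{E}_{xsca}\bigl[V_{k,l}\bigl(X',S',C',\mathrm{a}_{k,l}(X',S',C')\bigr)\bigr].
\end{equation*}
Since $\mathrm{a}_{k,l}(x',s',c')=\arg\max_{a'}V_{k,l}(x',s',c',a')$, which is immediate from the definition of $\mathrm{a}_{G,h}$ in Equation~\ref{eq:action} combined with the definition of $V_{k,l}$, the greedy selection rewrites this as
\begin{equation*}
V_{k+1,l}(x,s,c,a)=\mathbb{E}_{xsca}\!\left[\max_{a'}V_{k,l}(X',S',C',a')\right].
\end{equation*}

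Next I would attack the right-hand side of Lemma~\ref{lemma2} at level $k+1$. Using the algebraic identity $s+c\sum_{t=0}^{k+1}\gamma^tR_t=S_1+C_1\sum_{t=0}^{k}\gamma^tR_{t+1}$ (with $S_1=s+cR_0$ and $C_1=\gamma c$), the tower property, and the Markov structure of $\pi_l$, I would condition on the first transition $(R_0,X_1)$ to obtain
\begin{equation*}
\mathbb{E}_{\pi_l,xsca}\!\left[h_l\!\left(s+c\sum_{t=0}^{k+1}\gamma^tR_t\right)\right]=\mathbb{E}_{xsca}\!\left[\sum_{a_1}\pi_l(a_1\mid X_1,S_1,C_1)\,\mathbb{E}_{\pi_l,X_1S_1C_1a_1}\!\left[h_l\!\left(S_1+C_1\sum_{t=0}^{k}\gamma^tR'_t\right)\right]\right],
\end{equation*}
where $R'_t:=R_{t+1}$ denotes the time-shifted trajectory. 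Maximizing over Markov policies and commuting the outer $\max$ with $\mathbb{E}_{xsca}$ is the key step: because a Markov policy's prescription at one augmented state is functionally independent of its prescription at any other, any pointwise-optimal family $\{\pi_l^{(x_1,s_1,c_1)}\}$ can be stitched into a single Markov policy that attains the supremum simultaneously at every reachable starting state. Pulling the max over $a_1$ out of the convex combination and applying the inductive hypothesis at the shifted start then yields
\begin{equation*}
\max_{\pi_l}\mathbb{E}_{\pi_l,xsca}\!\left[h_l\!\left(s+c\sum_{t=0}^{k+1}\gamma^tR_t\right)\right]=\mathbb{E}_{xsca}\!\left[\max_{a'}V_{k,l}(X',S',C',a')\right],
\end{equation*}
which matches the expression for $V_{k+1,l}(x,s,c,a)$ derived above, closing the induction.

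I expect the principal obstacle to be the measure-theoretic justification of the interchange $\max_{\pi_l}\mathbb{E}_{xsca}[\,\cdot\,]=\mathbb{E}_{xsca}[\max_{\pi_l}\,\cdot\,]$. For finite action spaces and a Borel-measurable augmented state space this reduces to a measurable-selection statement for the greedy policy; with more general state spaces one must invoke standard universally-measurable-selection results (e.g.\ Bertsekas--Shreve) and restrict attention to the appropriate policy class. A secondary subtlety is the rigorous identification of the time-shifted tail starting from $(X_1,S_1,C_1,a_1)$ with a fresh trajectory from that augmented state, which amounts to the Markov property applied to the augmented MDP defined in Section~\ref{background} and deserves explicit mention.
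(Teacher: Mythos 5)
Your proof follows exactly the paper's route: induction on $k$, with the base case from $G_{0,l}=0$ and the inductive step combining Lemma~\ref{lemma1} with the fact that the greedy rule $\mathrm{a}_{k,l}$ maximizes $V_{k,l}$ over actions. You additionally spell out the exchange of the policy supremum with the outer expectation and the policy-stitching argument, which the paper's one-line proof leaves implicit, so your write-up is a strictly more detailed version of the same argument.
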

\begin{proof}
We establish the validity of this lemma through induction on \(k\). The statement holds true for \(k=0\) with \(G_{0,l}(x, s, c, a)=0\). Assuming the statement is true for \(V_{k,l}\), we leverage the results of Lemma \ref{lemma1} and the fact that the policy \(\mathcal{G}_{l}\left(\eta_{k,l}\right)\) selects the action maximizing \(V_{k,l}\) to conclude the validity of the statement for \(V_{k+1,l}\).
\end{proof}

\begin{lemma}
\label{lemma3}
For each $(x,s,c,a) \in \mathcal{X} \times \mathcal{S} \times \mathcal{C} \times \mathcal{A}$, it holds that
\begin{equation}
\label{eq:ineq}
  V^{\pi_l^*}(x, s, c, a) - \varepsilon_k(x, s, c, a) \leq V_{k,l}(x, s, c, a) \leq V^{\pi_l^*}(x, s, c, a) 
\end{equation}
where $\lim_{k\rightarrow\infty}\varepsilon_k(x, s, c, a)=0$. It also holds that  $V_{k,l}(x, s, c, a) \uparrow V^{\pi_l^*}(x, s, c, a)$. 
\end{lemma}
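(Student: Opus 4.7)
The plan is to establish the three assertions (upper bound, lower bound with $\varepsilon_k \to 0$, and monotonicity) by exploiting two structural properties of the fixed function $h_l$ that follow from Equation \ref{functionh} and Appendix \ref{app:conjugate}. First, $h_l$ is non-decreasing on $\mathbb{R}$, since the integrand $z \mapsto F_Z^{-1}(\alpha) + \frac{1}{\alpha}(z - F_Z^{-1}(\alpha))^{-}$ is non-decreasing in $z$ for every $\alpha$. Second, $h_l$ is Lipschitz with constant $\phi(0)$, because its derivative $h_l'(z) = \phi(F_Z(z))$ computed in Appendix \ref{app:conjugate} is bounded above by $\phi(0)$ (as $\phi$ is non-increasing and bounded in the setting of Equation \ref{eq:srmg}). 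Combined with the assumption $R_{\mathrm{MIN}} \geq 0$, these two properties drive every inequality in the lemma.

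For the upper bound $V_{k,l}(x,s,c,a) \leq V^{\pi^*_l}(x,s,c,a)$, I would start from Lemma \ref{lemma2}, which represents $V_{k,l}$ as the maximum over $\pi_l \in \boldsymbol{\pi}_{\mathbf{M}}$ of $\mathbb{E}_{\pi_l}\!\left[h_l\!\left(s + c\sum_{t=0}^k \gamma^t R_t\right)\right]$. For any such policy, truncating the infinite sum after step $k$ only removes non-negative contributions to the argument of $h_l$, so non-decreasingness of $h_l$ yields $\mathbb{E}_{\pi_l}\!\left[h_l\!\left(s + c\sum_{t=0}^k \gamma^t R_t\right)\right] \leq \mathbb{E}_{\pi_l}\!\left[h_l\!\left(s + c\sum_{t=0}^\infty \gamma^t R_t\right)\right] = V^{\pi_l}(x,s,c,a)$; taking the maximum over $\pi_l$ on both sides gives the bound. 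The monotonicity $V_{k,l} \leq V_{k+1,l}$ comes for free by running exactly the same argument between horizons $k$ and $k+1$, using $R_{k+1} \geq 0$.

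For the lower bound, the key idea is to evaluate the finite-horizon quantity under the specific policy $\pi^*_l$, which by Lemma \ref{lemma2} is dominated by $V_{k,l}$. The Lipschitz property of $h_l$ then bounds the gap $V^{\pi^*_l}(x,s,c,a) - \mathbb{E}_{\pi^*_l}\!\left[h_l\!\left(s + c\sum_{t=0}^k \gamma^t R_t\right)\right]$ by $\phi(0)\, c\, \mathbb{E}_{\pi^*_l}\!\left[\sum_{t=k+1}^\infty \gamma^t R_t\right] \leq \phi(0)\, c\, \gamma^{k+1} G_{\mathrm{MAX}}$, so setting $\varepsilon_k(x,s,c,a) = \phi(0)\, c\, \gamma^{k+1} G_{\mathrm{MAX}}$ yields the claimed inequality and $\varepsilon_k \to 0$. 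Combining the upper bound, the lower bound, and the monotonicity then delivers $V_{k,l} \uparrow V^{\pi^*_l}$. The only step that deserves any real care is the Lipschitz bound: once one justifies that the derivative formula $h_l'(z) = \phi(F_Z(z))$ from Appendix \ref{app:conjugate} holds almost everywhere and that $\phi(0) < \infty$ in the bounded-spectrum regime underlying Equation \ref{eq:srmg}, the remaining manipulations are routine applications of Lemmas \ref{lemma1} and \ref{lemma2}.
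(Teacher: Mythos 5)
Your proposal is correct and follows essentially the same route as the paper's proof: both rest on the monotonicity and $\phi(0)$-Lipschitz property of $h_l$ to sandwich the truncated return between $V^{\pi_l}-\phi(0)c\gamma^{k+1}G_{\mathrm{MAX}}$ and $V^{\pi_l}$, then invoke Lemma \ref{lemma2} and pass to the optimum over policies. Your explicit justification of the Lipschitz constant via $h_l'(z)=\phi(F_Z(z))\leq\phi(0)$, and your choice to instantiate the lower bound at $\pi_l^*$ rather than taking a supremum over all policies, are only cosmetic refinements of the paper's argument.
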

\begin{proof}
Let $(R_t)_{t \geq 0}$ be a sequence of rewards in $\left[R_{\mathrm{MIN}} , R_{\mathrm{MAX}} \right]$ for any policy $\pi_l$. Since $h_{l}$ is a non-decreasing function, we have 
\begin{equation}
h_{l}\left(s + c\sum_{t=0}^k \gamma^t R_t\right) \leq h_{l}\left(s + c\sum_{t=0}^{\infty} \gamma^t R_t\right).
\end{equation}
Also, we can use the $\phi(0)$-Lipschitz property of $h_{l}$, i.e. $h_{l}(u_1 + u_2) - h_{l}(u_1) \leq \phi(0)u_2$, to write 
\begin{equation}
    h_{l}\left(s + c\sum_{t=0}^{\infty} \gamma^t R_t\right) - h_{l}\left(s + c\sum_{t=0}^{k} \gamma^t R_t\right) \leq \phi(0)c\sum_{t=k+1}^{\infty} \gamma^t R_t \leq \phi(0)c\gamma^{k+1} G_{\mathrm{MAX}}
\end{equation}
Combining these results yields the following inequality
\begin{equation}
V^{\pi_{l}}_\infty(x, s, c, a) - \varepsilon_k(x, s, c, a) \leq V^{\pi_{l}}_k(x, s, c, a) \leq V^{\pi_{l}}_\infty(x, s, c, a)  
\end{equation}
where $\varepsilon_k(x, s, c, a) = \phi(0)c\gamma^{k+1}G_{\mathrm{MAX}}$. This shows that $\lim_{k\rightarrow\infty}\varepsilon_k(x, s, c, a)=0$. Taking the supremum over all policies and applying Lemma \ref{lemma2} results in Inequality \ref{eq:ineq}. By setting $G_{0, l}(x, s, c, a)=0$ and considering the non-negativity of rewards, we ensure that $V_{k,l}(x, s, c, a)$ is increasing with respect to $k$ and therefore we have $V_{k,l}(x, s, c, a) \uparrow V^{\pi_l^*}(x, s, c, a)$.
\end{proof}

\begin{theorem}
With $\pi_{k,l}=\mathcal{G}_{l}\left(\eta_{k,l}\right)$, it holds that $\lim_{k\rightarrow\infty}V^{\pi_{k,l}} = V^{\pi_l^*}$.
\end{theorem}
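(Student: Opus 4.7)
The plan is to introduce the suboptimality gap
\[
\delta_k(x,s,c,a) := V^{\pi_l^*}(x,s,c,a) - V^{\pi_{k,l}}(x,s,c,a),
\]
which is non-negative by optimality of $\pi_l^*$, and to show $\delta_k \to 0$ uniformly by combining Lemma 3 with a one-step inequality plus a geometric unrolling. The two key inputs are: (i) the bound $V^{\pi_l^*} - V_{k,l} \leq \phi(0)\,c\,\gamma^{k+1}G_{\mathrm{MAX}}$ from Lemma 3; and (ii) the fact that by construction $\pi_{k,l} = \mathrm{a}_{k,l}$ is greedy with respect to $V_{k,l}$.

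First, I would derive a one-step inequality for $\delta_k$. Starting from the Bellman optimality equation for $V^{\pi_l^*}$ and the policy-evaluation Bellman equation for $V^{\pi_{k,l}}$ (both established via Lemma 1 in the limit), I add and subtract $V_{k,l}$ evaluated at the next state, then invoke greediness, so that $\max_{a'} V_{k,l}(X',S',\gamma c,a') = V_{k,l}(X',S',\gamma c,\mathrm{a}_{k,l}(X',S',\gamma c))$, together with Lemma 3 applied at the shrunk discount $\gamma c$ and the trivial bound $V_{k,l} \leq V^{\pi_l^*}$. After straightforward manipulation this yields
\[
\delta_k(x,s,c,a) \;\leq\; \mathbb{E}\bigl[\delta_k(X', S', \gamma c, \mathrm{a}_{k,l}(X',S',\gamma c))\bigr] + \phi(0)\,\gamma c\,\gamma^{k+1} G_{\mathrm{MAX}}.
\]

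Next, I would iterate this inequality $n$ times along the trajectory generated by $\pi_{k,l}$, exploiting the fact that $C_j = \gamma^j c$ shrinks geometrically and that the additive error at step $j$ is proportional to $C_j$. Summing the resulting geometric series gives
\[
\delta_k(x,s,c,a) \;\leq\; \phi(0)\,c\,\gamma^{k+1} G_{\mathrm{MAX}} \sum_{j=1}^{n} \gamma^{j} \;+\; \mathbb{E}[\delta_k(X_n, S_n, \gamma^n c, A_n)].
\]
The residual is controlled by the trivial uniform estimate $\delta_k(x,s,c,a) \leq 2\phi(0)\,c\,G_{\mathrm{MAX}}$ (from the Lipschitz property of $h_l$ and boundedness of $G^\pi$), giving a residual of at most $2\phi(0)\,\gamma^n c\,G_{\mathrm{MAX}}$, which vanishes as $n \to \infty$.

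Taking $n \to \infty$ and then $k \to \infty$ yields
\[
0 \;\leq\; \delta_k(x,s,c,a) \;\leq\; \frac{\phi(0)\,c\,\gamma^{k+2}\, G_{\mathrm{MAX}}}{1-\gamma} \;\longrightarrow\; 0,
\]
which gives $V^{\pi_{k,l}} \to V^{\pi_l^*}$. The main obstacle I anticipate is the clean derivation of the one-step inequality: three ingredients (Bellman optimality, policy-evaluation Bellman, and Lemma 3) must be combined without double-counting the approximation error, and the greedy inequality has to be applied at the evolved state $(\cdot,\cdot,\gamma c,\cdot)$ so the $c$-scaling propagates correctly through the unrolling. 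The geometric summation and the residual control are then routine.
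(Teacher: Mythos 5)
Your proposal is correct, but it proves the limit by a genuinely different route than the paper. The paper's proof never touches the gap $V^{\pi_l^*}-V^{\pi_{k,l}}$ directly: instead it shows $V^{\pi_{k,l}}(x,s,c,a)\geq V_{k,l}(x,s,c,a)$ by proving $\epsilon_k:=\inf\mathbb{E}\left[G^{\pi_{k,l}}-G_{k,l}\right]$ satisfies the self-referential inequality $\epsilon_k\geq\gamma\epsilon_k$ (hence $\epsilon_k\geq 0$), transfers this sign through the monotone $h_l$ via ratio inequalities bounded below by $\phi(1)$, and then squeezes $V^{\pi_l^*}-\varepsilon_k\leq V_{k,l}\leq V^{\pi_{k,l}}\leq V^{\pi_l^*}$ using Lemma 3. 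You instead run the classical approximate-greedy error-propagation argument: the one-step inequality $\delta_k(x,s,c,a)\leq\mathbb{E}\left[\delta_k(X',S',\gamma c,\mathrm{a}_{k,l}(X',S',\gamma c))\right]+\phi(0)\gamma c\gamma^{k+1}G_{\mathrm{MAX}}$ does follow from Lemma 1, Lemma 3, and greediness of $\mathrm{a}_{k,l}$ with respect to $V_{k,l}$ exactly as you sketch (the key observation being that in the utility MDP the contraction is carried by the state variable $C_t=\gamma^t c$ rather than by an explicit discount in the Bellman recursion, so the per-step error and the residual both scale with $C_j$), and the geometric unrolling is sound. What each approach buys: yours is the more generic and robust argument and yields an explicit suboptimality rate, but that rate, $\phi(0)c\gamma^{k+2}G_{\mathrm{MAX}}/(1-\gamma)$, is inflated by a factor $\gamma/(1-\gamma)$ relative to the bound $\phi(0)c\gamma^{k+1}G_{\mathrm{MAX}}$ asserted in the paper's main Theorem 1, which the paper obtains precisely because its route establishes the stronger intermediate fact $V^{\pi_{k,l}}\geq V_{k,l}$; so your argument suffices for the limit statement here but would not recover the sharper constant in the main-text error bound. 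The price the paper pays is the somewhat delicate step of dividing by expectations of return differences (Equations \ref{eq:ineq40}--\ref{eq:ineq41}), which your proof avoids entirely.
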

\begin{proof}
Given that the function $h_{l}$ is non-decreasing and considering the definitions of $V_{k,l}(x, s, c, a)$ and $V^{\pi_{k,l}}(x, s, c, a)$, we can write:
\begin{equation}
\label{eq:ineq40}
 0 \leq \phi\left(1\right) \leq \frac{\mathbb{E}\left[h_{l}\left(s + c G_{k,l}(x, s, c, a)\right) - h_{l}\left(s + c G_{k-1, l}(x, s, c, a)\right)\right]}{c\mathbb{E}\left[G_{k,l}(x, s, c, a) - G_{k-1,l}(x, s, c, a)\right]},   
\end{equation}
and  
\begin{equation}
\label{eq:ineq41}
0 \leq \phi\left(1\right) \leq \frac{\mathbb{E}\left[h_{l}\left(s + c G^{\pi_{k,l}}(x, s, c, a)\right) - h_{l}\left(s + c G_{k, l}(x, s, c, a)\right)\right]}{c\mathbb{E}\left[G^{\pi_{k,l}}(x, s, c, a) - G_{k,l}(x, s, c, a)\right]} .
\end{equation}
Since $V_{k,l}(x, s, c, a)$ is increasing w.r.t $k$, the numerator in Equation \ref{eq:ineq40} is positive and we can conclude that $\mathbb{E}\left[G_{k,l}(x, s, c, a)\right] \geq \mathbb{E}\left[G_{k-1,l}(x, s, c, a)\right]$. Utilizing Equation \ref{eq:recursiveZ1}, we can also infer that 
\begin{equation}
\label{eq:ineq43}
\mathbb{E}_{x s c a}\left[\mathbb{E}\left[G_{k,l}\left(X^{\prime}, S^{\prime}, C^{\prime}, \mathrm{a}_{k,l}\left(X^{\prime}, S^{\prime}, C^{\prime}\right)\right)\right]\right] \geq 
\mathbb{E}_{x s c a}\left[\mathbb{E}\left[G_{k-1,l}\left(X^{\prime}, S^{\prime}, C^{\prime}, \mathrm{a}_{k-1,l}\left(X^{\prime}, S^{\prime}, C^{\prime}\right)\right)\right]\right].    
\end{equation}
Now in Equation \ref{eq:ineq41}, in order to show that $V^{\pi_{k,l}}(x, s, c, a) - V_{k,l}(x, s, c, a)\geq 0$ in every state-action, we need to show that the denominator in this equation is also always positive. With $\epsilon_k(x, s, c, a):=\mathbb{E}\left[G^{\pi_{k,l}}(x, s, c, a)-G_{k,l}(x, s, c, a)\right]$, we have
\begin{align}
\label{eq:ineq5}
& \epsilon_k(x, s, c, a) = \mathbb{E}\left[G^{\pi_{k,l}}(x, s, c, a)-G_{k,l}(x, s, c, a)\right] \nonumber  \\
& \quad  = \mathbb{E}_{x s c a}\left[\mathbb{E}\left[R +\gamma G^{\pi_{k,l}}\left(X^{\prime}, S^{\prime}, C^{\prime}, \mathrm{a}_{k,l}\left(X^{\prime}, S^{\prime}, C^{\prime}\right)\right) - R  - \gamma G_{k-1,l}\left(X^{\prime}, S^{\prime}, C^{\prime}, \mathrm{a}_{k-1,l}\left(X^{\prime}, S^{\prime}, C^{\prime}\right)\right)\right]\right] \nonumber\\ 
& \quad  = \gamma \mathbb{E}_{x s c a}\left[\mathbb{E}\left[ G^{\pi_{k,l}}\left(X^{\prime}, S^{\prime}, C^{\prime}, \mathrm{a}_{k,l}\left(X^{\prime}, S^{\prime}, C^{\prime}\right)\right) - G_{k-1,l}\left(X^{\prime}, S^{\prime}, C^{\prime}, \mathrm{a}_{k-1,l}\left(X^{\prime}, S^{\prime}, C^{\prime}\right)\right)\right]\right] \nonumber \\ 
& \quad  \stackrel{(a)}{\geq} \gamma \mathbb{E}_{x s c a}\left[\mathbb{E}\left[ G^{\pi_{k,l}}\left(X^{\prime}, S^{\prime}, C^{\prime}, \mathrm{a}_{k,l}\left(X^{\prime}, S^{\prime}, C^{\prime}\right)\right) - G_{k,l}\left(X^{\prime}, S^{\prime}, C^{\prime}, \mathrm{a}_{k,l}\left(X^{\prime}, S^{\prime}, C^{\prime}\right)\right)\right]\right] \nonumber \\
& \quad  = \gamma \mathbb{E}_{x s c a}\left[\epsilon_k\left(X^{\prime}, S^{\prime}, C^{\prime}, \mathrm{a}_{k,l}\left(X^{\prime}, S^{\prime}, C^{\prime}\right)\right)\right],
\end{align}
where we use Equation \ref{eq:ineq43} for $(a)$. Given that $\epsilon_k(x,  s, c, a)$ is bounded from below, its infimum $\epsilon_k:=\inf_{(x, s, c, a)} \epsilon_k(x, s, c, a)$ exists, so we can take infimum from both sides of Equation \ref{eq:ineq5} and replace $\epsilon_k(x, s, c, a)$ with $\epsilon_k$. This leads to
\begin{equation}
    \epsilon_k \geq \gamma \epsilon_k \implies \epsilon_k \geq 0,
\end{equation}
demonstrating that both denominator and numerator in Equation \ref{eq:ineq41} are positive. Therefore, we can prove the theorem using the Squeeze Theorem and Lemma \ref{lemma3}.
\end{proof}

\section{Lower Bound for the Objective}
\label{app:outer}
In Appendix \ref{app:inner}, we showed that the fixed point of each distributional Bellman operator $\mathcal{T}^{\mathcal{G}_{l}}$ denoted by $\eta_l^*$ and instantiated as $G^{\pi_{l}^{*}}$ can be found and we were able to provide the error bound for each $\pi_{k,l}$. Using the fact that $\int_0^1 \hat{h}_{l}(\phi(u)) \mathrm{d} u= 0$ for $l\in\mathbb{N}$, the update rule for function $h$ in Equation \ref{eq:iterg} shows  
\[
\mathbb{E}\left[h_{l+1}\left(G^{\pi_{l}^{*}}(X_0, 0, 1, \mathrm{a}_{G^{\pi_{l}^{*}}, h_{l+1}}\left(X_0, 0 ,1\right))\right)\right] \geq \mathbb{E}\left[h_{l}\left(G^{\pi_{l}^{*}}(X_0, 0, 1, \mathrm{a}^{*}_{l}\left(X_0, 0 ,1\right))\right)\right].
\]
where $\mathrm{a}^{*}_{l}(x, s, c) = \mathrm{a}_{G^{\pi_{l}^{*}}, h_{l}}\left(x, s, c\right)$ denotes the optimal action when the same function $h_l$ is used to estimate $G^{\pi_{l}^{*}}$ and calculate $\mathbb{E}\left[h_{l}(\cdot)\right]$. Remember that the return-variable of the optimal policy derived with the fixed function $h_{l}$ and $h_{l+1}$ is denoted by $G^{\pi_{l}^{*}}$ and $G^{\pi_{l+1}^{*}}$. Therefore, we have 
\begin{align*}
\mathbb{E}\left[h_{l+1}\left(G^{\pi_{l+1}^{*}}(X_0, 0, 1, \mathrm{a}^{*}_{l+1}\left(X_0, 0 ,1\right))\right)\right] 
 & \geq \mathbb{E}\left[h_{l+1}\left(G^{\pi_{l}^{*}}(X_0, 0, 1, \mathrm{a}_{G^{\pi_{l}^{*}}, h_{l+1}}\left(X_0, 0 ,1 \right))\right)\right] \\
 \implies \mathbb{E}\left[h_{l+1}\left(G^{\pi_{l+1}^{*}}(X_0, 0, 1, \mathrm{a}^{*}_{l+1}\left(X_0, 0 ,1\right))\right)\right] 
 & \geq \mathbb{E}\left[h_{l}\left(G^{\pi_{l}^{*}}(X_0, 0, 1, \mathrm{a}^{*}_{l}\left(X_0, 0 ,1\right))\right)\right],
\end{align*}
and relative to our objective, we can write:
\begin{align*}
\operatorname{SRM}_{\phi}\left(G^{\pi_{l}^{*}}\right)  & = \sup _{h \in \mathcal{H}} \mathbb{E}\left[h\left(G^{\pi_{l}^{*}}(X_0, 0, 1, \mathrm{a}^{*}_{l}\left(X_0, 0 ,1\right))\right)\right]\\
 & \geq \mathbb{E}\left[h_{l}\left(G^{\pi_{l}^{*}}(X_0, 0, 1, \mathrm{a}^{*}_{l}\left(X_0, 0 ,1\right))\right)\right]
\end{align*}
Since the rewards are bounded, both $G^{\pi_{l}^{*}}$ and function $h_{l}$ are bounded. Therefore, the monotonic increase of $J(\pi^*_l, h_l)=V^{\pi^*_{l}}(X_0, 0, 1, \mathrm{a}^{*}_{l}\left(X_0, 0 ,1\right))$ as $l\rightarrow\infty$ provides a lower bound for the objective. 

\section{Details of Algorithm \ref{alg:qrsrm}}
\label{app:action}
The quantile regression loss function used in this algorithm helps estimate the quantiles by penalizing both overestimation and underestimation with weights $\tau$ and $1-\tau$, respectively. The quantile Huber loss function \citep{Huber1992} uses the squared regression loss in an interval $[-\kappa, \kappa]$ to prevent the gradient from becoming constant when $u\rightarrow0^+$:
\begin{equation}
    \rho_\tau^\kappa(u)=\left|\tau-\delta_{\{u<0\}}\right| \mathcal{L}_\kappa(u)
\end{equation}
where the Huber loss $\mathcal{L}_\kappa(u)$ is given by
\begin{equation}
    \mathcal{L}_\kappa(u)=\left\{\begin{array}{ll}
    \frac{1}{2} u^2, & \text { if }|u| \leq \kappa \\
    \kappa\left(|u|-\frac{1}{2} \kappa\right), & \text { otherwise }
    \end{array} .\right.
\end{equation}
Furthermore, since function $h_{l}$ is approximated with the quantile representation of $\tilde{G}:=G^{\pi_{l-1}^{*}}$ and Equation \ref{functionh}, we need to show how $\mathbb{E}\left[h_{l}(s_k^{\prime} + c_k^{\prime}\theta_j(x_k^{\prime}, s_k^{\prime}, c_k^{\prime}, a^{\prime}))\right]$ is calculated. With $z_j := s_k^{\prime} + c_k^{\prime}\theta_j(x_k^{\prime}, s_k^{\prime}, c_k^{\prime}, a^{\prime})$ and $\tilde{\theta}_i:=F_{\tilde{G}}^{-1}(\hat{\tau}_i)$, we can write
\begin{align}
\label{eq:tildeh}
    h_{l}(z_j) & = \int_{0}^{1} F_{\tilde{G}}^{-1}(\alpha)+\frac{1}{\alpha}\left(z_j - F_{\tilde{G}}^{-1}(\alpha)\right)^{-} \mu(\mathrm{d}\alpha) \nonumber \\
    & = \sum_{i} \left(\int_{\tau_{i-1}}^{\tau_{i}} F_{\tilde{G}}^{-1}(\alpha)+\frac{1}{\alpha}\left(z_j - F_{\tilde{G}}^{-1}(\alpha)\right)^{-} \mu(\mathrm{d}\alpha) \right) \nonumber\\
    & = \sum_{i} \left(\int_{\tau_{i-1}}^{\tau_{i}} F_{\tilde{G}}^{-1}(\alpha) \mu(\mathrm{d}\alpha) + \int_{\tau_{i-1}}^{\tau_{i}}\frac{1}{\alpha}\left(z_j - F_{\tilde{G}}^{-1}(\alpha)\right)^{-} \mu(\mathrm{d}\alpha) \right) \nonumber\\
    & \stackrel{(a)}{=} \sum_{i} \left(\tilde{\theta}_i \int_{\tau_{i-1}}^{\tau_{i}} \mu(\mathrm{d}\alpha) + \left(z_j - \tilde{\theta}_i\right)^{-} \int_{\tau_{i-1}}^{\tau_{i}}\frac{1}{\alpha} \mu(\mathrm{d}\alpha) \right).
\end{align}
In this calculation, the integration interval $[0,1]$ is divided into $N$ intervals $[\tau_{0} ,\tau_{1}), [\tau_{1} ,\tau_{2}), \cdots, [\tau_{N-2} ,\tau_{N-1}), [\tau_{N-1} ,\tau_{N}]$. Therefore, the integrals $\int_{\tau_{i-1}}^{\tau_{i}} \mu(\mathrm{d}\alpha)$ is calculated on $[\tau_{i-1}, \tau_i)$, including the lower limit $\tau_{i-1}$ and excluding the upper limit $\tau_i$. In (a), we used the fact that $\tilde{\theta}_i$ is constant in $[\tau_{i-1}, \tau_i)$. Also, the first term in the summation can be omitted since it is constant for all actions.

In this algorithm, it's also important to highlight the direct relationship between the number of quantiles and the expressiveness of $\operatorname{SRM}$. For example, when the return distribution is approximated with $N$ quantiles, the expectation can be estimated with $\tilde{\mu}_{N} = \int_{1-1/N}^{1} \frac{1}{\alpha} \mu(\mathrm{d} \alpha) = 1$ and $\tilde{\mu}_{j} = 0$ for $1\leq j < N$. Similarly, $\operatorname{CVaR}_\alpha$ for $\alpha<1$ can be approximated by setting $\tilde{\mu}_{j} = 1/\alpha$ for $j = \floor{\alpha N}+1$ and $\tilde{\mu}_{j} = 0$ otherwise.

\section{Convergence of Algorithm \ref{alg:qrsrm}}
\label{app:convergence}
As described in Section \ref{DRL}, we parameterize the return distribution using a quantile representation. Specifically, we employ a quantile projection operator, $\Pi_Q$, to map any return distribution $\eta$ onto its quantile representation with respect to the 1-Wasserstein distance ($\mathrm{w}_1$). Therefore, 
$\Pi_Q \eta = \hat{\eta} = \frac{1}{N}\sum_{i=1}^{N} \delta_{\theta_i}$ with $\theta_i=F_{\eta}^{-1}\left(\hat{\tau}_i\right), \hat{\tau}_i=(\tau_{i-1}+\tau_i)/2, 1 \leq i \leq N$ corresponds to the solution of the following minimization problem:  
\[
\text{minimize } \mathrm{w}_1(\eta, \eta^\prime) \text{ subject to } \eta^\prime \in \mathscr{F}_{Q,N}
\]
where $\mathscr{F}_{Q,N}$ is the space of quantile representations with $N$ quantiles. Using this definition, Algorithm \ref{alg:qrsrm} can be expressed as iteratively updating  
\[
\hat{\eta}_{k+1, l} = \Pi_Q \mathcal{T}^{\mathcal{G}_{l}} \hat{\eta}_{k,l}.
\]  
As previously noted, this process is analogous to the iteration in the QR-DQN algorithm, with two key differences: the incorporation of risk-sensitive greedy action selection and the use of an extended state space. Consequently, we can leverage the steps outlined in \citet[Section 7.3]{Bellemare.etal2023} to establish the convergence of $\Pi_Q \mathcal{T}^{\mathcal{G}_{l}}$.  

To begin, we will demonstrate that $\mathcal{T}^{\mathcal{G}_{l}}$ is a contraction mapping. That is, the sequence of iterates defined by $\eta_{k+1, l} = \mathcal{T}^{\mathcal{G}_{l}} \eta_{k,l}$ converges to $\eta^{\pi^*_l}$ with respect to the supremum $p$-Wasserstein distance, $\mathrm{\bar{w}}_p$, for $p \in [1, \infty]$. Here, we assume the existence of a unique optimal policy $\pi^*_l$.\footnote{For cases with multiple optimal policies in the risk-neutral setting, refer to \citet[Section 7.5]{Bellemare.etal2023}. Extending this result to the risk-sensitive case is straightforward.} With this assumption, we leverage the fact that the action gap, $\operatorname{GAP}(Q)$—defined as the smallest difference between the highest-valued and second-highest-valued actions across all states for a given Q-function—is strictly positive. By setting $\bar{\varepsilon} = \operatorname{GAP}(V^{\pi^*_l}) / 2$ and using Lemma \ref{lemma3}, we can see that after $K_{\bar{\varepsilon}}\in\mathbb{N}$ iterations where $K_{\bar{\varepsilon}} := \floor{{\ln (\frac{\bar{\varepsilon}}{\phi(0)G_{\mathrm{MAX}}})}/{\ln (\gamma)}}$, the greedy action in state $(x, s, c)$ becomes the optimal action $a^*$, and for any $a \neq a^*$, we have:  
\[
\begin{aligned}
V_{k,l}(x, s, c, a^*) & \geq V^{\pi^*_l}(x, s, c, a^*) - \bar{\varepsilon} \\
& \geq V^{\pi^*_l}(x, s, c, a) + \operatorname{GAP}(V^{\pi^*_l}) - \bar{\varepsilon} \\
& > V_{k,l}(x, s, c, a) + \operatorname{GAP}(V^{\pi^*_l}) - 2\bar{\varepsilon} \\
& = V_{k,l}(x, s, c, a).
\end{aligned}
\]

Thus, after $K_{\bar{\varepsilon}}$ iterations, the policy induced by the return distribution becomes the optimal policy. Beyond this point, the distributional optimality operator transitions to the distributional Bellman operator for the optimal policy, which is a known $\gamma$-contraction with respect to $\mathrm{\bar{w}}_p$. Using this result, we conclude that the combined operator $\Pi_Q \mathcal{T}^{\mathcal{G}_{l}}$ is a contraction with respect to $\mathrm{\bar{w}}_{\infty}$, as established in \citet[Proposition 2]{Dabney.etal2018a}.

\section{Additional discussion on Time and Dynamic Consistency}
\label{app:TDC}
In this section, we need to introduce new notations to discuss the flow of information. Suppose we have a sequence of real-valued random variable spaces denoted as $\mathcal{Z}_0 \subset \cdots \subset\mathcal{Z}_T, \mathcal{Z}_t:=L_p\left(\Omega, \mathcal{F}_t, \mathbb{P}\right)$. Here, $Z_t: \Omega \rightarrow \mathbb{R}$ represents an element of the space $\mathcal{Z}_t$.

Moreover, let us define the preference system $\left\{\rho_{t, T}\right\}_{t=0}^{T-1}$ as the family of preference mappings $\rho_{t, T}:\mathcal{Z}_{T} \rightarrow \mathcal{Z}_t, t=0,\ldots,T-1$. The conditional expectation denoted by $\mathbb{E}\left[\cdot \mid \mathcal{F}_t\right]$ is an example of such mappings. With these notations, our optimization problem defined in Equation \ref{eq:mainobjj} can be written with $\rho=\rho_{0, T}$ and $Z^{\pi}=Z^{\pi}_{0,T}$, where $Z^{\pi}_{t,T}\in\mathcal{Z}_T$ denotes the cumulative reward starting from time $t$. The following definitions help us discuss the connection between the risk measure $\rho$ and the policy $\pi$:
\begin{definition}[\textbf{Time-consistency}]
An optimal policy $\pi^* = \left(a_0^*, \ldots, a_T^* \right)$ is time-consistent if for any $t = 1, \ldots, T$, the shifted policy $\overrightarrow{\pi}^* = \left(a_t^*, \ldots, a_T^*\right)$ is optimal for
\begin{equation}
\label{eq:rpobjective}
\max_{\pi \in \boldsymbol{\pi}} \rho_{t, T}\left(Z^\pi_{t, T}\right).
\end{equation}
\end{definition}
\begin{definition}[\textbf{Dynamic-consistency}]
The preference system $\left\{\rho_{t, T}\right\}_{t=0}^{T-1}$ is said to exhibit dynamic consistency if the following implication holds for all $0 \leq t_1 < t_2 \leq T-1$:
\begin{equation}
 \rho_{t_2, T}(Z) \succeq \rho_{t_2, T}\left(Z^{\prime}\right) \Longrightarrow \rho_{t_1, T}(Z) \succeq \rho_{t_1, T}\left(Z^{\prime}\right) \quad Z, Z^{\prime} \in \mathcal{Z}_T, 0\leq t_1<t_2\leq T-1 .
\end{equation}
Additionally, this preference system is said to exhibit strict dynamic consistency if the following implication holds:
\begin{equation}
 \rho_{t_2, T}(Z) \succ \rho_{t_2, T}\left(Z^{\prime}\right) \Longrightarrow \rho_{t_1, T}(Z) \succ \rho_{t_1, T}\left(Z^{\prime}\right) \quad Z, Z^{\prime} \in \mathcal{Z}_T, 0\leq t_1<t_2\leq T-1 .
\end{equation}
\end{definition}
Note that dynamic-consistency is a property of a preference system and time-consistency is a property of a policy w.r.t a preference system. Although some authors (e.g. \citet{Ruszczynski2010}) have used the term "time-consistency" for preference systems, in this context, we maintain the distinction between these two terms. The primary rationale behind this distinction is that using a dynamically consistent preference system implies a time-consistent policy only when the optimal policy is unique. In scenarios with multiple optimal policies, additional conditions must be satisfied \citep{Shapiro.etal2014}. Nonetheless, employing these preference systems has been a widely adopted approach in the RSRL literature to ensure the time-consistency of the optimal policy\footnote{In these works, a set of dynamic programming equations are defined and the optimal policies serve as a solution to these equations, which ensure the time-consistency. Additional details on this topic can be explored in the work of \citet{Shapiro.Pichler2016}.}. To understand the necessary properties of a dynamically consistent preference system, we require additional definitions:
\begin{definition}[\textbf{Recursivity}]
The preference system $\left\{\rho_{t, T}\right\}_{t=0}^{T-1}$ is said to be recursive if
\begin{equation}
\rho_{t_1, T}\left(\rho_{t_2, T}(Z)\right)=\rho_{t_1, T}(Z) \quad Z \in \mathcal{Z}_T, 0 \leq t_1<t_2 \leq T-1.
\end{equation}
\end{definition}
For instance, \citet{Kupper.Schachermayer2009} show that the only law invariant convex risk measure that has the recursion property $\rho\left(\rho(\cdot\mid \mathcal{G})\right)=\rho(\cdot)$ for $\mathcal{G} \subset \mathcal{F}, \mathcal{G} \neq \mathcal{F}$ is the Entropic risk measure:
\begin{equation}
\label{eq:entropic}
\rho(Z)=\frac{1}{\gamma} \log \mathbb{E}[\exp (\gamma Z)], \gamma \in[0, \infty] .    
\end{equation}
Therefore, using the above-mentioned $\rho$ yields a recursive preference system. 
The Entropic risk measure is monotone, translation-invariant, and convex. However, it does not have the positive homogeneity property, so it is not suitable for applications in which this property is essential. Nevertheless, the risk measures $\rho(\cdot)=\mathbb{E}(\cdot)$ and $\rho(\cdot)=\operatorname{ess} \sup (\cdot)$, which are the boundary cases of the Entropic risk measure with $\gamma=0$ and $\gamma=\infty$, have the positive homogeneity property and therefore are coherent risk measures.
\begin{definition}[\textbf{Decomposability}]
The preference mappings $\rho_{t, T}$ are considered to be decomposable via a family of one-step mappings $\rho_t: \mathcal{Z}_{t+1} \rightarrow \mathcal{Z}_{t}$ if they can be expressed as compositions
\begin{equation}
\rho_{t, T}(Z)=\rho_{t}\left(\rho_{t+1}\left(\cdots \rho_{T-1}(Z)\right)\right), Z \in \mathcal{Z}_T.
\end{equation} 
\end{definition}
It is easily seen that the preference mappings of a recursive preference system, such as the one with the Entropic risk measure, are also decomposable. The inverse, however, is not always true and a set of decomposable preference mappings constitute a recursive preference system only if their corresponding one-step mappings are translation-invariant and $\rho_t(0)=0$ for $t=0,\ldots,T$. For instance, a convex conditional risk measure such as $\rho_t=\operatorname{CVaR}_{\alpha}(\cdot \mid \mathcal{F}_t)$ can be used as the one-step mapping and establish a decomposable and recursive preference system\footnote{These risk measures are also called Nested Risk Measures in the literature}. At last, in both of these cases, whether the preference mappings of a recursive preference system or the one-step mappings of a decomposable preference mapping are monotone, the preference system is dynamically consistent.

As mentioned before, the dynamic consistency of the preference system only implies the time-consistency of a unique optimal policy. To guarantee the time-consistency of all optimal policies, \citet{Shapiro.Ugurlu2016} shows that the preference system has to be strictly dynamically consistent. This requires the preference mappings of a recursive preference system or the one-step mappings of a decomposable preference mapping to be strictly monotone, i.e, the following implication must hold:
\[
Z \succ Z^{\prime} \implies \rho_{t, T}(Z) \succ \rho_{t, T}\left(Z^{\prime}\right), Z, Z^{\prime} \in \mathcal{Z}_T.
\]

The Spectral risk measure is an example of strictly monotone preference mappings only if the risk spectrum $\phi(u)$ is positive on the interval $(0,1)$. Consequently, $\operatorname{CVaR}_{\alpha}$ is not strictly monotone for $\alpha \in(0,1)$ and we cannot deduce that, for the preference system characterized by nested Conditional Value-at-Risk, every optimal solution of the corresponding reference problem is time-consistent. An easy way to ensure that $\phi(u)=\int_u^1 \mu(\mathrm{d}\alpha)>0$ for $u\in(0,1)$ is to check whether $\mu(\mathrm{d}\alpha)|_{\alpha=1}$ is non-zero or not. In other words, if the risk measure assigns a non-zero weight on the expectation ($\operatorname{CVaR}_{1}$), the resulting SRM is strictly monotone.

The decomposition theorem shows that the preference mappings $\rho_{t,T}$ can also be provided for SRM. It also shows that these preference mappings are strictly monotone if the initial risk measure is a strictly monotone SRM. This property is evident since for $\alpha=1$, $\xi^\alpha$ and consequently $\alpha\xi_{t}^{\alpha}$ is also $1$. Therefore, the weight of $\operatorname{CVaR}_{1}$ in the preference mappings would also be non-zero. Intuitively, if the risk measure takes into account the entire distribution to calculate the risk-adjusted value, i.e. has a non-zero weight for the expected value, the resulting preference mappings also have this property.

Additionally, the goal of analyzing the evolution of risk preferences over time can be achieved with the preference mappings, without the need for deriving the one-step mappings. The intermediate random variable $\xi_{t,t-1}$ in the one-step mappings shows how the risk preference at time $t$ changes compared to the previous risk preference at time $t-1$, however, $\xi^{\alpha}_{t}$ in the decomposition theorem shows how the risk preference at time $t$ changes compared to the initial risk preference. For example, the CVaR risk level at time $t$, $\alpha_t$, can be written as both $\alpha_{t-1}\xi_{t,t-1}$ or $\alpha\xi_t$. Similarly, in the CVaR case, the risk parameter $B_{t+1}$ can be written as both $(B_t - R_t)/\gamma$ or $(B_0 - S_t)/C_t$.

\section{Proof of Theorem \ref{theoremalphaxi}}
\label{alphaxiproof}
For general distributions, we have $\xi^\alpha(z)=1/\alpha$ for $z<\lambda_\alpha$ and $\xi^\alpha(z)=0$ for $z>\lambda_\alpha$. To discuss the value of $\xi^\alpha(z)$ when $z=\lambda_\alpha$, let us consider two cases based on the continuity of $F_{G}(z)$ at $z=\lambda_\alpha$, i.e, whether $p_G(\lambda_\alpha)=0$ or $p_G(\lambda_\alpha)>0$. For the first case, we simply have 
\begin{equation}
\xi^\alpha(z) = 
\begin{cases}
1/\alpha & \text { if } z\leq\lambda_\alpha \\ 
0 & \text { if } z>\lambda_\alpha \\ 
\end{cases}
\end{equation}
Since $G$ is a convex combination of $s_t + c_t G_t$, we know that $p_G(\lambda_\alpha)=0$ implies $p_{G_t}((\lambda_\alpha - s_t)/c_t)=0$. Therefore, we have:
\begin{align}
    \xi_{t}^\alpha & = \mathbb{E}\left[\xi^\alpha \mid \mathcal{F}_{t}\right] \nonumber\\
    & = \frac{1}{\alpha}\mathbb{E}\left[1_{\{s_t + c_t G_t\leq\lambda_\alpha\}} \right] \nonumber\\
    & = \frac{1}{\alpha}F_{G_t}(\frac{\lambda_\alpha - s_t}{c_t}) \nonumber\\
    \implies & \alpha\xi_{t}^\alpha = F_{G_t}(\frac{\lambda_\alpha - s_t}{c_t})
\end{align}
For the second case where $p_G(\lambda_\alpha)>0$, we use the fact that $\mathbb{E}\left[\xi^\alpha\right]=1$ to write $\xi^\alpha(\lambda_\alpha)$ as a function of $F_G\left(\lambda_\alpha\right)$ and $p_G(\lambda_\alpha)$:
\begin{equation}
\xi^\alpha(z) = 
\begin{cases}
1/\alpha & \text { if } z<\lambda_\alpha \\ 
(1 - \frac{1}{\alpha}\left(F_G\left(\lambda_\alpha\right) - p_G(\lambda_\alpha)\right))/p_G(\lambda_\alpha) & \text { if } z=\lambda_\alpha \\ 
0 & \text { if } z>\lambda_\alpha \\ 
\end{cases}
\end{equation}
Note that the set $\{z<\lambda_\alpha\}$ can be empty, especially for small $\alpha$. In this case, $\xi^\alpha(\lambda_\alpha) = 1/p_G(\lambda_\alpha)$ would be the only non-zero $\xi^\alpha(z)$. Using this information, we can calculate $\xi_{t}^\alpha$:
\begin{align}
\label{eq:alphaxi}
    \xi_{t}^\alpha & = \mathbb{E}\left[\xi^\alpha \mid \mathcal{F}_{t}\right] \nonumber\\
    & = \frac{1}{\alpha}\cdot\mathbb{E}\left[1_{\{s_t + c_t G_t<\lambda_\alpha\}} \right] + 
        \frac{1 - \frac{1}{\alpha}\left(F_G\left(\lambda_\alpha\right) - p_G(\lambda_\alpha)\right)}{p_G(\lambda_\alpha)}\cdot\mathbb{E}\left[1_{\{s_t + c_t G_t=\lambda_\alpha\}} \right] \nonumber\\
    & = \frac{1}{\alpha}(F_{G_t}(\frac{\lambda_\alpha - s_t}{c_t})-p_{G_t}(\frac{\lambda_\alpha - s_t}{c_t})) + \frac{1 - \frac{1}{\alpha}\left(F_G\left(\lambda_\alpha\right) - p_G(\lambda_\alpha)\right)}{p_G(\lambda_\alpha)}\cdot p_{G_t}(\frac{\lambda_\alpha - s_t}{c_t}) \nonumber\\
     & = \frac{1}{\alpha}F_{G_t}(\frac{\lambda_\alpha - s_t}{c_t})-\frac{1}{\alpha}p_{G_t}(\frac{\lambda_\alpha - s_t}{c_t})\cdot\left(1 - \frac{\alpha - \left(F_G(\lambda_\alpha) - p_G(\lambda_\alpha)\right)}{p_G(\lambda_\alpha)}\right) \nonumber\\
     & = \frac{1}{\alpha}F_{G_t}(\frac{\lambda_\alpha - s_t}{c_t})-\frac{1}{\alpha}p_{G_t}(\frac{\lambda_\alpha - s_t}{c_t})\cdot\frac{F_G(\lambda_\alpha) - \alpha}{p_G(\lambda_\alpha)} \nonumber\\
    \implies & \alpha\xi_{t}^\alpha = F_{G_t}(\frac{\lambda_\alpha - s_t}{c_t}) - p_{G_t}(\frac{\lambda_\alpha - s_t}{c_t})\cdot\frac{F_G(\lambda_\alpha) - \alpha}{p_G(\lambda_\alpha)}
\end{align}
Notice that this can also be simplified to
\begin{equation} 
\label{eq:alphaxi2}
    \alpha\xi_{t}^\alpha = F_{G_t}(\frac{\lambda_\alpha - s_t}{c_t})
\end{equation}
if $F_{G_t}(z)$ does not have a discontinuity at $z=\frac{\lambda_\alpha - s_t}{c_t}$.

In the DRL framework, only estimates of the return-distributions are available. When estimating the distribution with the Quantile representation, it's easy to see that $\lambda_\alpha=\theta_i$ for $\tau_{i-1}\leq\alpha<\tau_i$, so $p_G(\lambda_\alpha)=1/N$. If $(\lambda_\alpha - s_t)/c_t$ is equal to any of the estimated $\theta_{t,i}$, we have $p_{G_t}((\lambda_\alpha - s_t)/c_t)=1/N$ and $\alpha\xi^\alpha$ can be estimated with
\[
\alpha\xi_{t}^\alpha = F_{G_t}(\frac{\lambda_\alpha - s_t}{c_t}) - (F_G(\lambda_\alpha) - \alpha).
\]
Otherwise, we have $p_{G_t}((\lambda_\alpha - s_t)/c_t)=0$ and $\alpha\xi^\alpha$ can be estimated with Equation \ref{eq:alphaxi2}. 

\section{Examples for Calculating the Intermediate Risk Preferences}
\label{app:examples}

\begin{example}
\label{ex:ex2} 
Suppose that the quantiles of $G$ and $G_t$, denoted by $\theta$ and $\theta_t$, are given as in the table below. Also suppose that $s_t = 5$, $c_t = 0.8$, and we are interested in the following risk measure at the initial state:
\[
\rho(G) = 0.6\cdot\operatorname{CVaR}_{0.25}(G) + 0.4\cdot\operatorname{CVaR}_{0.8}(G).
\]

\begin{table}[!ht]
    \centering
    \caption{{\small The quantiles of $G$ and $G_t$, and the dual variables $\xi^{0.25}$ and $\xi^{0.8}$ to calculate $\rho(G)$}}
    \begin{tabular}{lllll} 
    \toprule
     $\hat{\tau}$ & $\theta$ & $\xi^{0.25}$ & $\xi^{0.8}$ & $\theta_t$ \\ [0.5ex] 
     \midrule
    5\%   & 7     & 4     & 1.25  & 5 \\
    15\%  & 9     & 4     & 1.25  & 6 \\
    25\%  & 12    & 2     & 1.25  & 8 \\
    35\%  & 20    & 0     & 1.25  & 14 \\
    45\%  & 21    & 0     & 1.25  & 15 \\
    55\%  & 27    & 0     & 1.25  & 17 \\
    65\%  & 30    & 0     & 1.25  & 21 \\
    75\%  & 32    & 0     & 1.25  & 25 \\
    85\%  & 39    & 0     & 0     & 28 \\
    95\%  & 46    & 0     & 0     & 35 \\
    \bottomrule
    \end{tabular}
    \label{tab:table2}
\end{table}

For $\alpha=0.25$, $\operatorname{CVaR}_{\alpha}$ and $\lambda_{\alpha}$ are $8.8$ and $12$. For $\alpha=0.8$, these values are $19.75$ and $39$. Now we can use Equation \ref{eq:alphaxi2} to calculate the $\alpha\xi_{t}^{\alpha}$ values.
\begin{align*}
    &0.25\xi_{t}^{0.25} =  F_{G_t}(\frac{12 - 5}{0.8}) = F_{G_t}(8.75) = 0.3, \\
    &0.8\xi_{t}^{0.8} =  F_{G_t}(\frac{39 - 5}{0.8}) = F_{G_t}(42.5) = 1.0.
\end{align*}
With $\xi_{t}^{0.25} = 0.3/0.25 = 1.2$ and $\xi_{t}^{0.8} = 1.0/0.8 = 1.25$, we can see that at time $t$, the risk measure changes to 
\[
\rho_{\xi}(G_t) = 0.59\cdot\operatorname{CVaR}_{0.3}\left(G_t\right) + 0.41\cdot\operatorname{CVaR}_{1.0}\left(G_t\right).
\]
where $\xi = 0.6\cdot1.2 + 0.4\cdot1.25 = 1.22$.

\begin{figure}[ht]
    \centering
    \subfigure[The Quantile function]{
        \begin{minipage}{0.47\linewidth}
            \centering
            \begin{tikzpicture}[scale=0.8]
            \definecolor{ggg}{RGB}{131, 179, 102}
            \definecolor{gg}{RGB}{213, 233, 213}
            \definecolor{bbb}{RGB}{108, 143, 192}
            \definecolor{bb}{RGB}{217, 232, 252}
            \begin{axis}[
                grid=both,
                minor x tick num=3,
                minor y tick num=4,
                grid style={line width=.1pt, draw=gray!20},
                major grid style={line width=.2pt,draw=gray!50},
                clip=false,
                jump mark left,
                ymin=0,ymax=50,
                xmin=0, xmax=1,
                every axis plot/.style={thick},
                discontinuous,
                table/create on use/cumulative distribution/.style={
                    create col/expr={\pgfmathaccuma + \thisrow{f(x)}}   
                }
            ]
            \addplot [bbb] table [row sep=crcr]{
            x f(x)\\
            0	5\\
            0.1	6\\
            0.2	8\\
            0.3	14\\
            0.4	15\\
            0.5	17\\
            0.6	21\\
            0.7	25\\
            0.8	28\\
            0.9	35\\
            1	35\\
            };
            \addplot [black] table [row sep=crcr]{
            x f(x)\\
            0	7\\
            0.1	9\\
            0.2	12\\
            0.3	20\\
            0.4	21\\
            0.5	27\\
            0.6	30\\
            0.7	32\\
            0.8	39\\
            0.9	46\\
            1	46\\
            };
            \end{axis}
            \end{tikzpicture}
        \end{minipage}
        \label{fig:fig31}
    }
    \hfill
    \subfigure[The CDF]{
        \begin{minipage}{0.47\linewidth}
            \centering
            \begin{tikzpicture}[scale=0.8]
            \definecolor{ggg}{RGB}{131, 179, 102}
            \definecolor{gg}{RGB}{213, 233, 213}
            \definecolor{bbb}{RGB}{108, 143, 192}
            \definecolor{bb}{RGB}{217, 232, 252}
            \begin{axis}[
                grid=both,
                grid style={line width=.1pt, draw=gray!20},
                major grid style={line width=.2pt,draw=gray!50},
                minor y tick num=3,
                minor x tick num=4,
                clip=false,
                jump mark left,
                ymin=0,ymax=1,
                xmin=0, xmax=50,
                every axis plot/.style={very thick},
                discontinuous,
                table/create on use/cumulative distribution/.style={
                    create col/expr={\pgfmathaccuma + \thisrow{f(x)}}   
                }
            ]
            \addplot [bbb] table [y=cumulative distribution,row sep=crcr]{
            x f(x)\\
            0 0\\
            5	0.1\\
            6	0.1\\
            8	0.1\\
            14	0.1\\
            15	0.1\\
            17	0.1\\
            21	0.1\\
            25	0.1\\
            28	0.1\\
            35	0.1\\
            50 0\\
            };
            \addplot [black] table [y=cumulative distribution,row sep=crcr]{
            x f(x)\\
            0 0\\
            7	0.1\\
            9	0.1\\
            12	0.1\\
            20	0.1\\
            21	0.1\\
            27	0.1\\
            30	0.1\\
            32	0.1\\
            39	0.1\\
            46	0.1\\
            50 0\\
            };
            \end{axis}
            \end{tikzpicture}
        \end{minipage}
        \label{fig:fig32}
    }
    \caption{{\small The Quantile function and the CDF of $G$ (black) and $G_t$ (blue) in Example \ref{ex:ex2}}}
\end{figure}

\end{example}

\begin{example}
\label{ex:ex3}
In this example, we provide an intuition behind the computation of intermediate risk preferences in the Mean-reversion Trading environment. To start, let's examine a sample trajectory within our environment, where we employ our model optimized for $\operatorname{CVaR}_{0.5}$. The details of this trajectory can be found in Table \ref{table:1}. In Figure \ref{fig:CDF1}, we also present the return distribution $G(x_t, s_t, c_t, a_t)$ at each time step $t$. These distributions characterize the agent's future reward when it begins in the state-action pair $(x_t, s_t, c_t, a_t)$ at time $t$ and follows the optimal policy. 

\begin{table}[!ht]
\centering
\caption{{\small The states and actions of a single trajectory in our algorithmic trading environment}}

\resizebox{0.8\textwidth}{!}{
    \begin{tabular}{llllllllllll}
    \toprule
    $t$   &    0  &     1  &     2  &     3  &     4  &     5  &     6  &     7  &     8  &     9  &      10 \\
    \midrule
    $P_t$ & 1.000 & 0.606 & 0.768 & 1.053 & 0.796 & 0.934 & 0.569 & 0.636 & 0.238 & 0.698 & 0.870 \\
    $q_t$ & 0.000 & -0.400 & 1.600 & 1.800 & -0.200 & 0.200 & 0.600 & 1.000 & 0.400 & 0.800 & 0.200 \\
    $s_t$ & 0.000 & 0.399 & -0.820 & -0.971 & 1.053 & 0.746 & 0.390 & 0.175 & 0.529 & 0.440 & 0.962 \\
    $c_t$ & 1.000 & 0.990 & 0.980 & 0.970 & 0.961 & 0.951 & 0.941 & 0.932 & 0.923 & 0.914 & 0.904 \\
    $a_t$ & -0.400 & 2.000 & 0.200 & -2.000 & 0.400 & 0.400 & 0.400 & -0.600 & 0.400 & -0.600 & 0.000 \\
    $r_t$ & 0.399 & -1.232 & -0.154 & 2.086 & -0.319 & -0.374 & -0.228 & 0.380 & -0.096 & 0.571 & 0.000 \\
    \hline
    $\alpha_t$ & 0.500 & 0.168 & 0.355 & 0.083 & 0.089 & 0.147 & 0.544 & 0.702 & 0.819 & 0.084 & 0.000 \\
    \bottomrule
    \end{tabular}}
\label{table:1}
\end{table}

In our model, the risk preference of the agent chosen at time $0$, which is associated with function $h$, remains constant throughout the trajectory. For instance, in our specific scenario with $\lambda_{0.5} = 0.874$, the agent's action selection is based on the average return below this value, corresponding to the $0.5$-quantile of the return distribution at the initial state. In order to apply this risk preference in all subsequent states, we need to align the return distribution in those states with the agent's perspective at the initial time. This alignment is achieved by scaling the return distribution by $c_t$ and adding $s_t$, as illustrated in Figure \ref{fig:CDF2}.

Now we can see that the value $\lambda_{0.5} = 0.874$ corresponds to a different quantile of the return distribution in subsequent states. For instance, action selection w.r.t the $0.5$-quantile of the return distribution at time $0$ shifts to $0.168$-quantile of the return distribution at time $1$. This mechanism enables us to observe how the agent's risk preference evolves over time. Here, we demonstrated the process for a single $\alpha$, but more complicated SRMs follow similar steps. The only additional step would be the calculation of the weight of each component of the risk measure, similar to the example in Appendix \ref{ex:ex2}. 

\begin{figure}[!ht]%
\centering
\subfigure[]{%
\label{fig:CDF1}%
\includegraphics[width=0.5\linewidth]{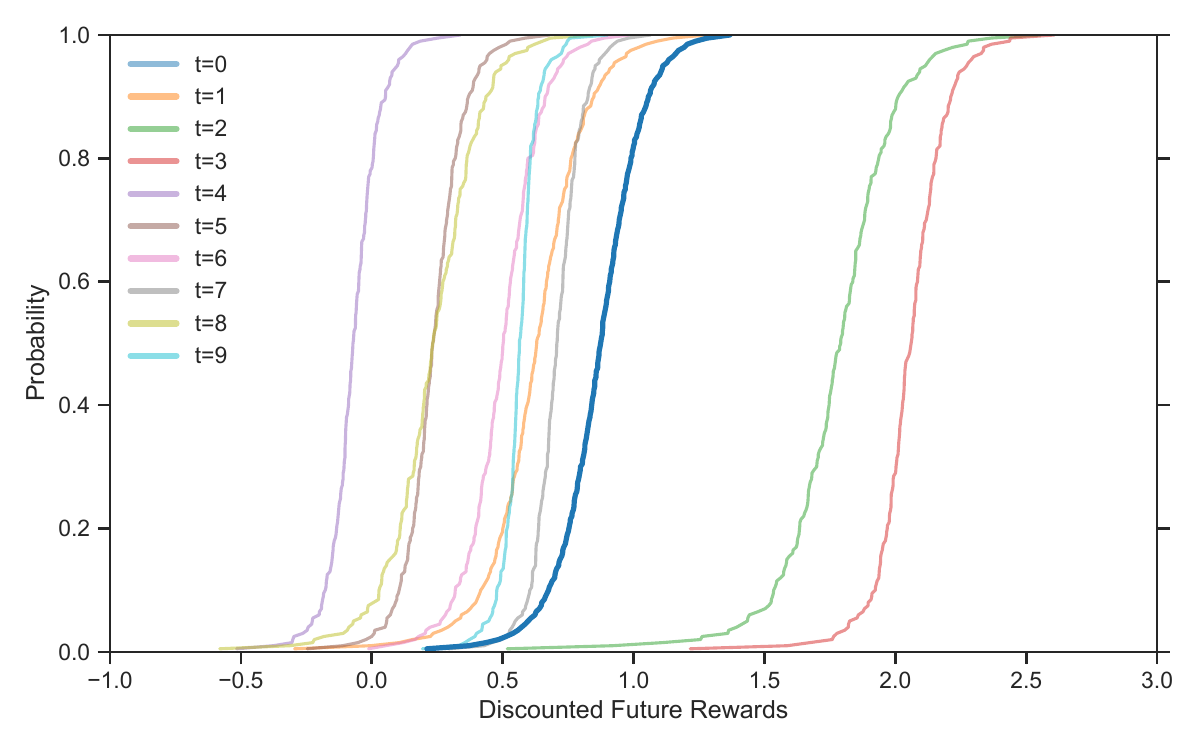}}%
\subfigure[]{%
\label{fig:CDF2}%
\includegraphics[width=0.5\linewidth]{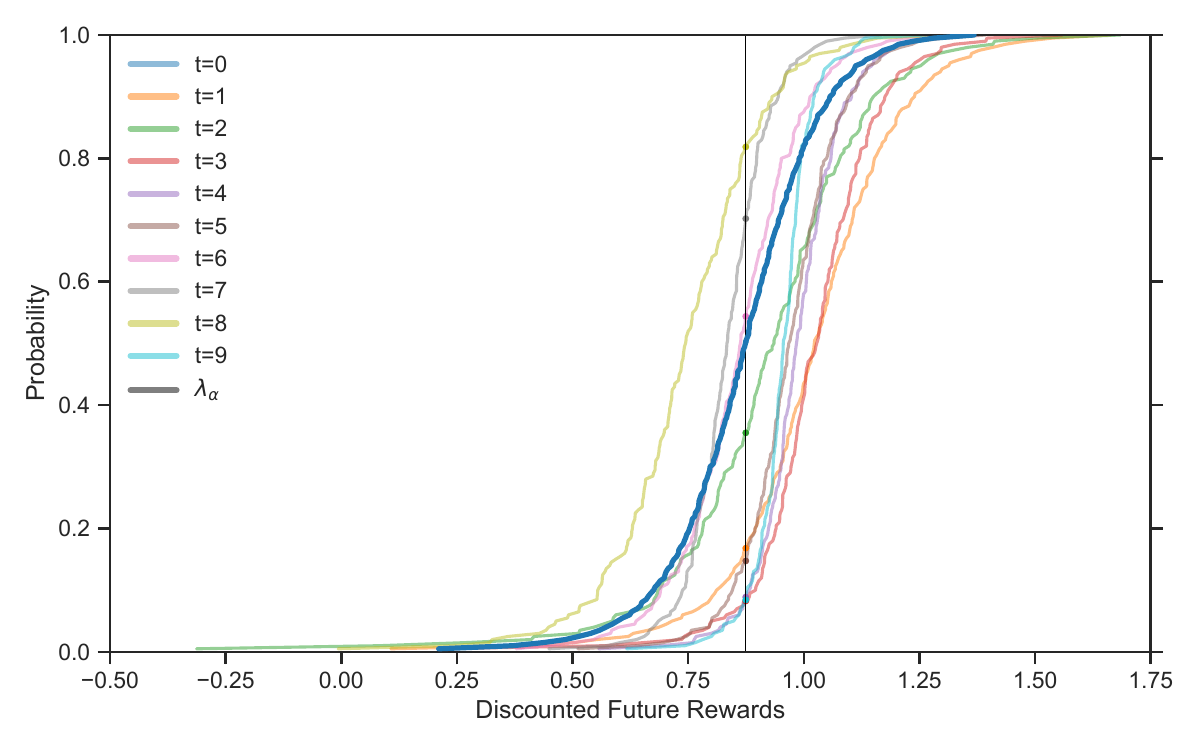}}%
\caption{{\small Figure \ref{fig:CDF1} and \ref{fig:CDF2} illustrate the CDF of $G(x_t,s_t,c_t,a_t)$ and $s_t + c_tG(x_t,s_t,c_t,a_t)$ for each state-action pair in a trajectory.}}
\end{figure}

\end{example}

\section{Experiments with number of Quantiles}
\label{app:n_quantile}
Due to the approximation of the probability measure $\mu$, an important question arises: Can our model find a policy that maximizes the expected return, the primary objective of the risk-neutral QR-DQN algorithm? To address this question, we conducted a comparison of the expected return produced by our model under varying quantile numbers ($N$), with $\tilde{\mu}_{N}$ set to 1, in the mean-reversion trading example. The results of this experiment, presented in Figure \ref{fig:sub31}, demonstrate that as the number of quantiles increases, our model not only matches the performance of the risk-neutral algorithm but surpasses it, yielding superior expected returns. Furthermore, in Figure \ref{fig:sub32}, we observe that the improvement extends beyond expected returns. The policy derived from our algorithm consistently attains higher $\operatorname{CVaR}_\alpha$ values for all $\alpha \in (0,1]$.

\begin{figure}[!ht]%
\centering
\subfigure[]{%
\label{fig:sub31}%
\includegraphics[width=0.5\linewidth]{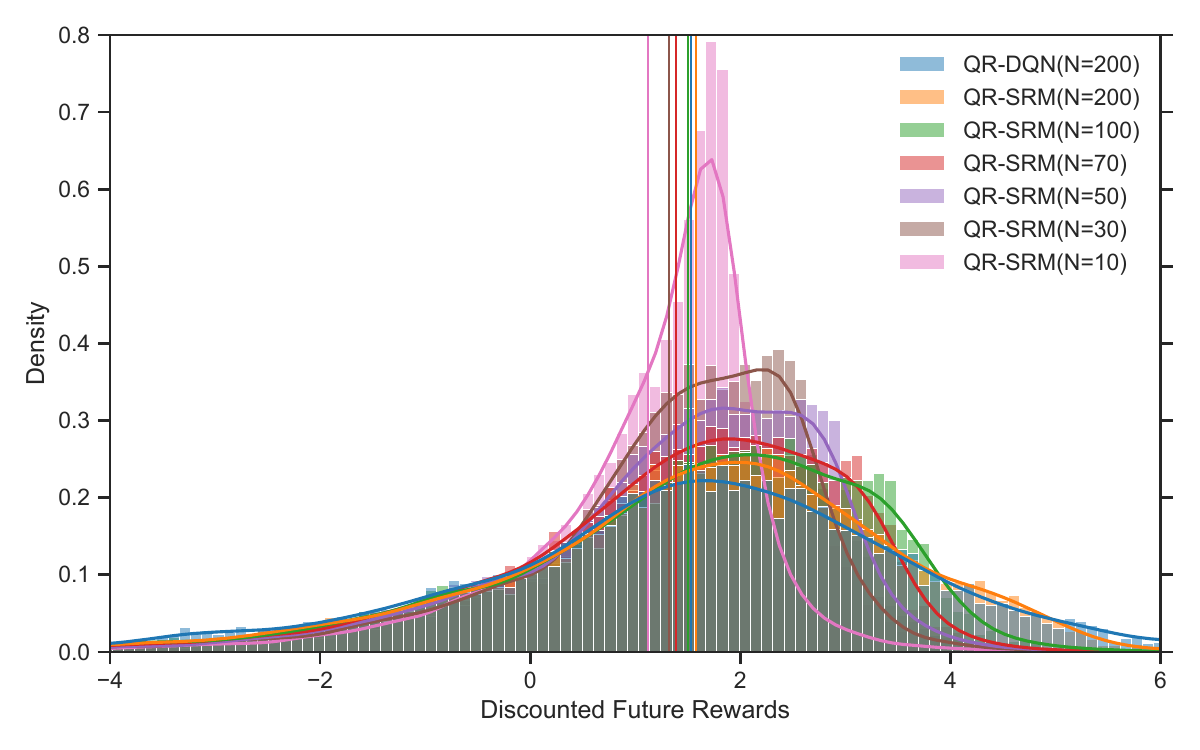}}%
\subfigure[]{%
\label{fig:sub32}%
\includegraphics[width=0.5\linewidth]{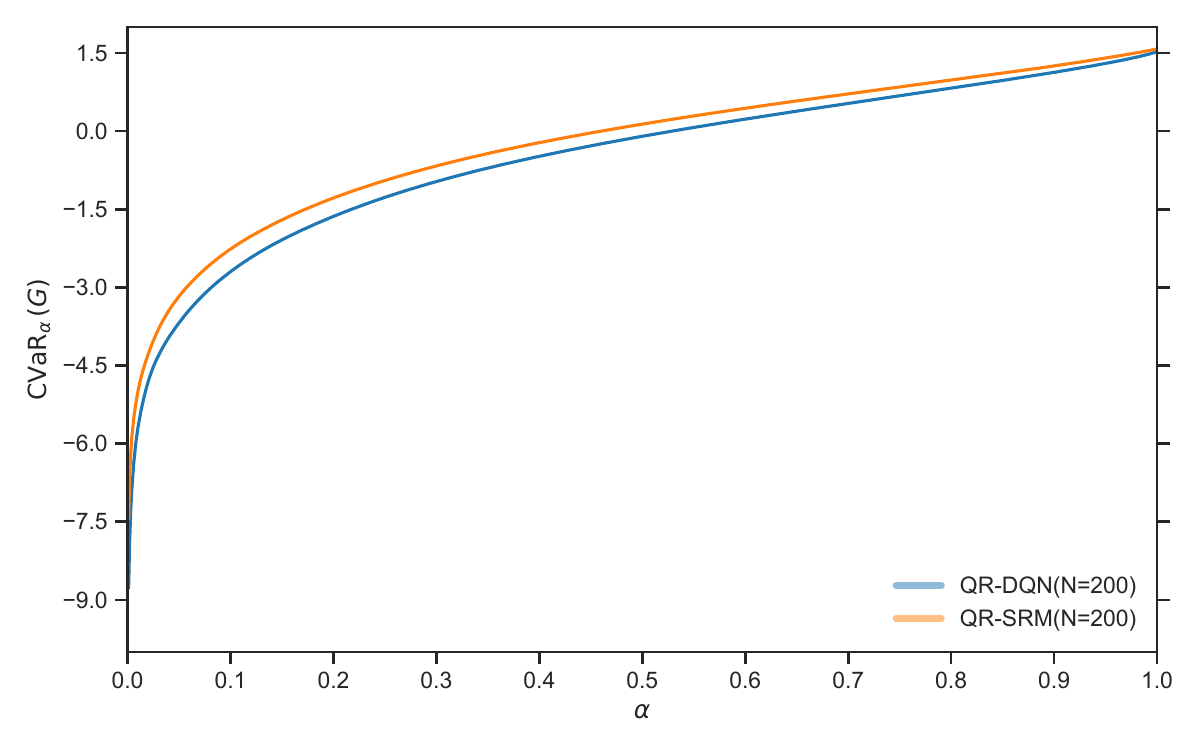}}%
\caption{{\small  Figure \ref{fig:sub31} displays the distribution of Cumulative Discounted Rewards for policies with different number of quantiles. The solid lines in this figure represent $\mathbb{E}\left[G\right]$. Figure \ref{fig:sub32} compares the performance of QR-SRM($\alpha$=$1$) against QR-DQN, both with $200$ quantiles, w.r.t to $\operatorname{CVaR}_\alpha$ for all $\alpha \in (0,1]$.}}
\end{figure}

However, this enhanced performance comes at a cost. When normalizing all models' scores with the QR-DQN($N$=50) score, as depicted in Figure \ref{fig:sub41}, all models reach an equivalent performance level within the same number of steps\footnote{Each step corresponds to a single interaction of the agent with the environment.}. Yet, this figure can be misleading since the time of action selection at each step increases quadratically with the number of quantiles. Specifically, for an action space size denoted as $A$, the QR-DQN model's action selection requires $\mathcal{O}(AN)$ operations, in contrast to our model, which requires $\mathcal{O}(AN^2)$ operations. Figure \ref{fig:sub42} presents the score plotted against the training time normalized to the training time of the QR-DQN($N$=50), revealing that transitioning from 50 quantiles to 200 has a less pronounced impact on the QR-DQN model compared to our model.

\begin{figure}[!ht]%
\centering
\subfigure[]{%
\label{fig:sub41}%
\includegraphics[width=0.33\linewidth]{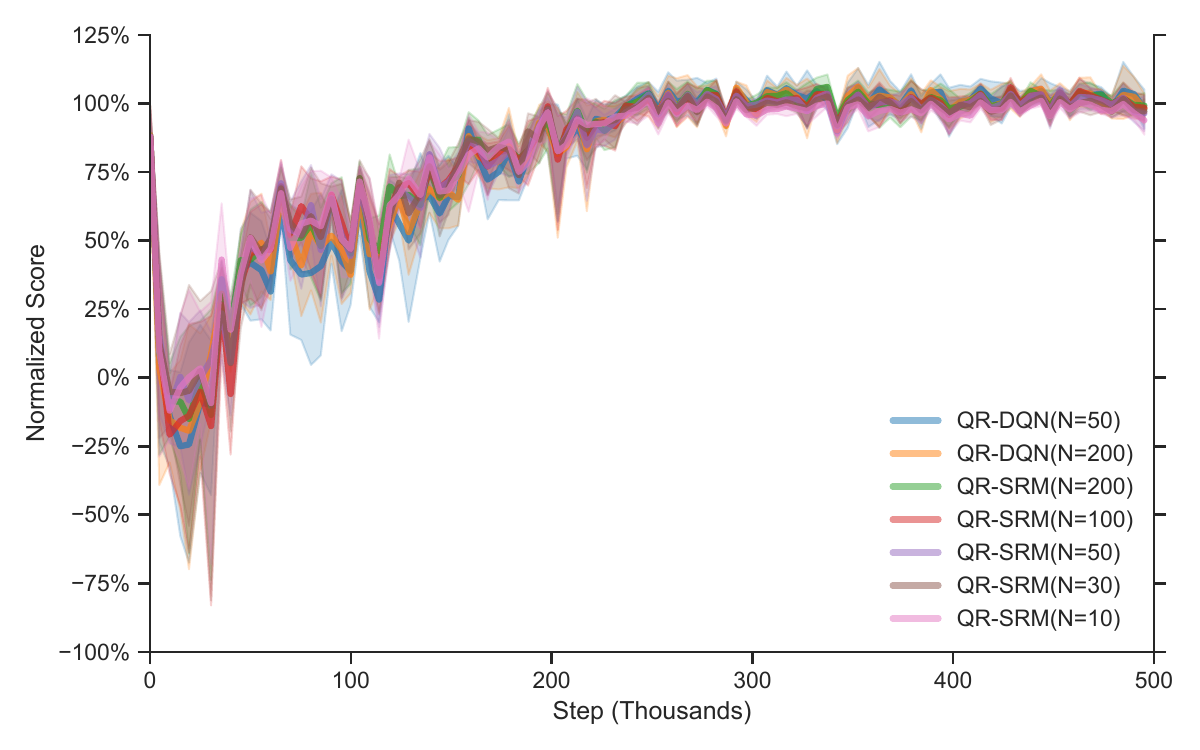}}%
\subfigure[]{%
\label{fig:sub42}%
\includegraphics[width=0.33\linewidth]{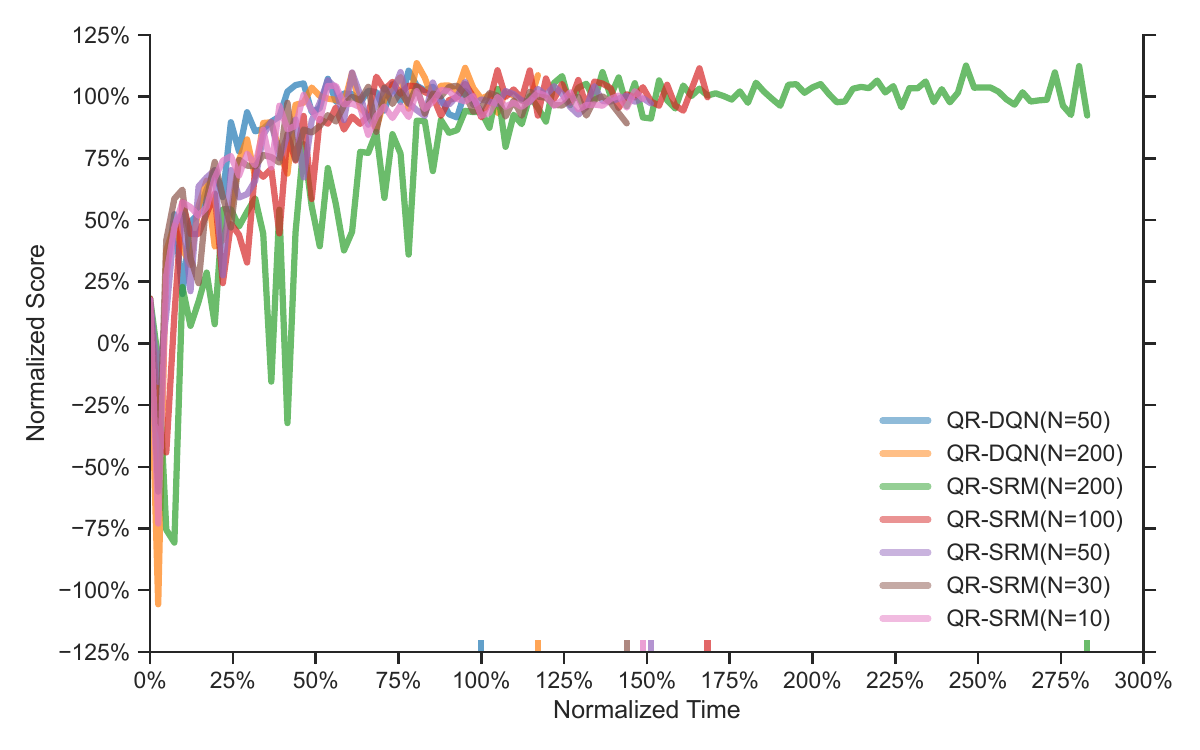}}%
\subfigure[]{%
\label{fig:sub43}%
\includegraphics[width=0.33\linewidth]{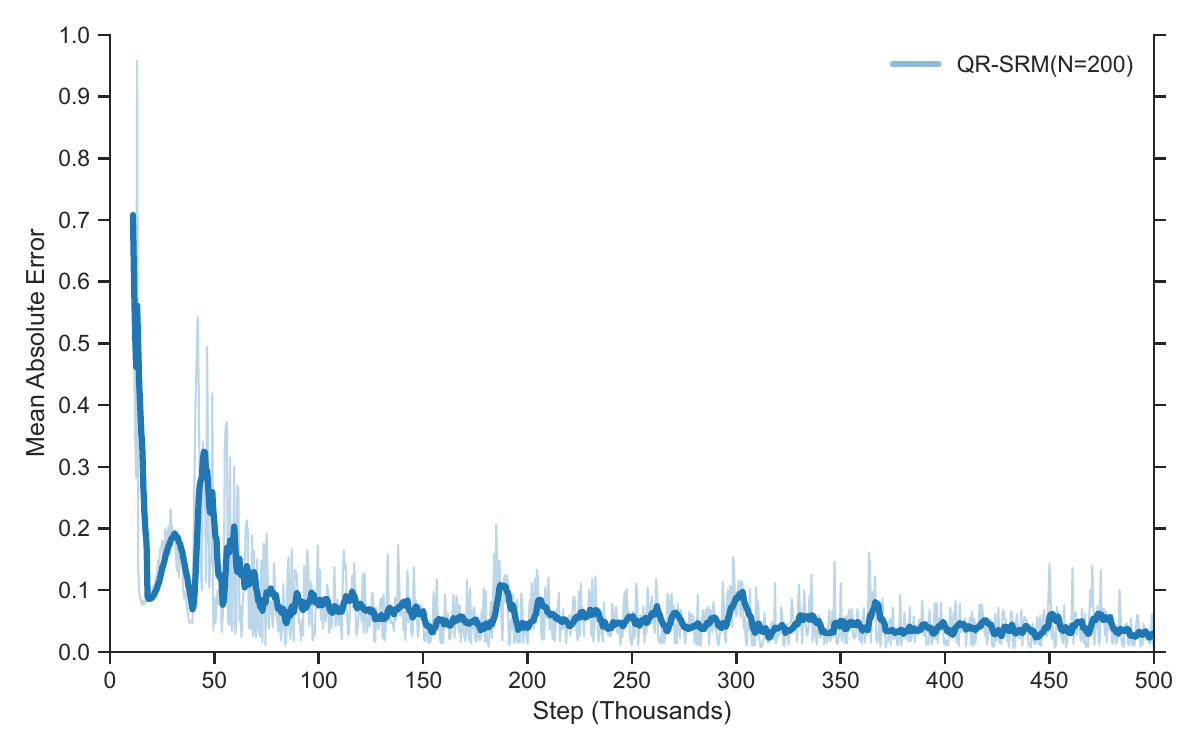}}%
\caption{{\small  Figure \ref{fig:sub41} and \ref{fig:sub42} displays the moving expected reward of our model and the QR-DQN model with different numbers of quantiles, plotted against the number of steps and time. Figure \ref{fig:sub43} shows the mean absolute error between consecutive estimations of the return distribution $G(x_0, 0, 1, a^*_0)$.}}
\end{figure}

In our algorithm, the estimation of function $h$ is updated periodically. This estimation is directly linked to the estimation of $G(x_0, 0, 1, a^*_0)$, making the convergence of this return distribution a useful indicator for the convergence of the function $h$. Figure \ref{fig:sub43} visualizes this convergence, presenting the mean absolute error between consecutive estimations of the return distribution.

\section{Details of the Environments}
\label{app:details}

\subsection{American Put Option Trading}
In this environment, we assume that the price of the underlying asset follows a Geometric Brownian Motion, characterized by the differential equation $\mathrm{d} P_t = \zeta P_{t}\mathrm{d}t + \sigma P_{t} \mathrm{d} W_t$, where $\zeta=-0.3$ is the drift, $\sigma=0.3$ is the volatility, the initial price is $P_0=1$, and $W_t$ is a standard Brownian motion. The strike price of the put option is assumed to be $K=1$. At each time step, the agent can either exercise the option and receive $r_t=\max\{0, K-P_t\}$ or hold the option to receive a reward at future steps. At maturity, if the option hasn't been exercised yet, the agent automatically receives $r_T=\max\{0, K-P_T\}$. 

\subsection{Mean-reversion Trading Strategy}
In this algorithmic trading framework, the asset price follows an Ornstein-Uhlenbeck process, characterized by the differential equation $\mathrm{d} P_t = \kappa(\zeta - P_t) \mathrm{d} t + \sigma \mathrm{d} W_t$, where $\zeta=1$ is the long-term mean level, $\kappa=2$ determines the speed of reversion to mean. At each time step $t=0,\cdots,T-1$, the agent takes an action $a_t \in\left(-a_{\max}, a_{\max}\right)$, corresponding to trading quantities of the asset and changes its inventory $q_t \in\left(-q_{\max}, q_{\max}\right)$. The reward is defined as $r_t = -a_{t} P_{t} - \varphi(a_{t})^2$ for $0\leq t \leq T-2$ and $r_{T-1} = -a_{T-1} P_{T-1} - \varphi(a_{T-1})^2 + q_T P_t - \psi q_T^2$ for the final time step. Here, $\varphi = 0.005$ represents the transaction cost and $\psi = 0.5$ signifies the terminal penalty. In our setup, the agent faces penalties for holding any assets at the final time step $T$. Consequently, the reward at time step $T-1$ has an additional term for the agent's inventory at time step $T$. In our example, we consider $T = 10$, $q_{\max} = 5$, $a_{\max} = 2$, $\gamma = 0.99$, and discretize the action space into 21 actions. 

\subsection{Windy Lunar Lander}
The Lunar Lander environment is a classic rocket trajectory optimization problem, involving an 8-dimensional state space and four actions: firing the left or right orientation engines, firing the main engine, or doing nothing. To introduce stochasticity, we enable the wind option. The objective is to guide the lander from the top of the screen to the landing pad. Successful landings yield around 100–140 points. If the lander moves away from the pad, it loses points, while a crash results in an additional penalty of -100 points. Landing safely adds a bonus of +100 points, and each leg that makes contact with the ground earns +10 points. Firing the main engine incurs a penalty of -0.3 points per frame, while firing the side engines costs -0.03 points per frame.

\section{Implementation details}
\label{app:implementation}
This section provides an overview of the implementation details of our model. We adopt the single-file implementation of RL algorithms from CleanRL \citep{Huang.etal2022b} for clarity. In this approach, the model and its training are encapsulated within a single file. The code for the project is available at \url{https://github.com/MehrdadMoghimi/QRSRM}.
\begin{table}[!ht]
    \caption{Default hyperparameters in different models}
    \label{tab:hyper}
    \centering
    \begin{tabular}{cc}
    \toprule
     Hyperparameter & Value \\
     \midrule
     Learning Rate & 2.5e-4 \\
     Discount Factor ($\gamma$) & 0.99 \\
     Batch Size & 256\\
     Number of Quantiles & 50 \\
     \bottomrule
    \end{tabular}
\end{table}

The repository contains four Python files for each algorithm discussed in section \ref{results} and Appendix \ref{app:n_quantile}. The \texttt{qrsrm.py} file defines the state-action value function with a feed-forward network that takes $(X,S,C)$ as input and outputs a $N\times A$ dimensional vector representing the quantile function of all actions. This neural network comprises three hidden layers, each with 128 neurons. The value function is similar across other files, with the only difference being their input value, which can be $(X)$ in \texttt{qrdqn.py} and \texttt{qricvar.py} or $(X,B)$ in \texttt{qrcvar.py}.

In \texttt{qrcvar.py} and \texttt{qricvar.py}, the variable \texttt{alpha} determines the risk preference of the agent. In \texttt{qrsrm.py}, the user can choose between different risk measures with the \texttt{risk-measure} variable. The value of \texttt{CVaR}, \texttt{WSCVaR}, \texttt{Dual}, and \texttt{Exp} for this variable is associated with the CVaR, Weighted sum of CVaRs, Dual Power and Exponential risk measures.

The number of timesteps to train each algorithm is determined by \texttt{total-timesteps} variable. A fraction of these timesteps is allocated to $\epsilon$-greedy exploration and the rest is allocated to learning the value function accurately. Also, in \texttt{qrsrm.py} and \texttt{qrcvar.py}, the estimation of function $h$ and target value $b$ needs frequent updating. The value of variables \texttt{h\_frequency} and \texttt{b\_frequency} in these files determine the update frequency for these estimations. Lastly, techniques such as Replay Buffers and Target Networks are employed to stabilize the training process for all of the algorithms.

The custom environments used in our experiments are available in the \texttt{custom\_envs.py} file. We implemented the American Option Trading and Mean-reversion Trading environments using the Gymnasium (formerly OpenAI Gym) package \citep{Gymnasium}. This package allows for the definition of the state space, action space, and environment dynamics with simple functions. The primary function is the \texttt{step} function, which takes an action as input and outputs the reward and the next state based on the current state.

The state space augmentation for QR-SRM and QR-CVaR models is also defined using two environment wrappers. These wrappers automatically store the target value $B$ or the accumulated discounted reward $S$ and the discount factor $C$ for a trajectory. The key advantage of these wrappers is their compatibility with any environment available in the Gymnasium package.

\end{document}
